\newtheorem{theorem}{Theorem}
\def\adl@drawiv#1#2#3{%
        \hskip.5\tabcolsep
        \xleaders#3{#2.5\@tempdimb #1{1}#2.5\@tempdimb}%
                #2\z@ plus1fil minus1fil\relax
        \hskip.5\tabcolsep}
\newcommand{\cdashlinelr}[1]{%
  \noalign{\vskip\aboverulesep
           \global\let\@dashdrawstore\adl@draw
           \global\let\adl@draw\adl@drawiv}
  \cdashline{#1}
  \noalign{\global\let\adl@draw\@dashdrawstore
           \vskip\belowrulesep}}
\title{The Neglected Sibling: Isotropic Gaussian Posterior for VAE}
\author {
    Lan Zhang\ \ \ 
    Wray Buntine\ \ \ 
    Ehsan Shareghi\ \ \ 
}
\begin{document}

\maketitle

\begin{abstract}
Deep generative models have been widely used in several areas of NLP, and various techniques have been proposed to augment them or address their training challenges. In this paper, we propose a simple modification to Variational Autoencoders~(VAEs) by using an Isotropic Gaussian Posterior~(IGP) that allows for better utilisation of their latent representation space. This model avoids the sub-optimal behavior of VAEs related to inactive dimensions in the representation space
. We provide both theoretical analysis, and empirical evidence on various datasets and tasks that show IGP leads to consistent improvement on several quantitative and qualitative grounds, from downstream task performance and sample efficiency to robustness. 
Additionally, we offer insights about the representational properties encouraged by IGP and also show that its gain generalises to image domain as well. \footnote{For code and data: \url{https://github.com/lanzhang128/IGPVAE}}

\end{abstract}
\section{Introduction}
Variational Autoencoders (VAEs)~\citep{kingma2013} have been widely used in various areas of NLP, from representation learning for downstream tasks~\citep{li-etal-2020-optimus, ijcai2017-582}, to generation~\citep{prokhorov2019, bowman2015}, and semi-supervised learning~\cite{zhu-etal-2021-combining,DBLP:conf/acl/ChoiKL19, DBLP:conf/acl/NeubigZYH18, DBLP:conf/aaai/XuSDT17}. 
In recent years, most of the developments around VAEs have focused on avoiding the commonly known posterior collapse problem~\cite{bowman2015} which leads to learning sub-optimal representations~\citep{DBLP:journals/corr/abs-2004-14758,fu-etal-2019-cyclical,li-etal-2019-surprisingly,dieng-et-al-2019-avoiding,he2019,higgins2016,Yang2017,bowman2015}. 

Despite the success of these techniques in avoiding the collapse, a non-collapsed VAE can still utilise the representation space sub-optimally~\citep{prokhorov2019,he2019,burda2015},  
%
as very commonly the learned representations do not fully utilise the latent space to encode information. 
Additionally, vanilla VAEs induce representation spaces as soft ellipsoidal regions~\citep{bowman2015} and in such a space, the mean vectors are less representative, specifically as features for downstream tasks~\cite{bosc-vincent-2020-sequence}.
%

\begin{figure}[t]
    \centering
        \includegraphics[trim=0 0.8cm 0 0, clip, width=0.23\textwidth]{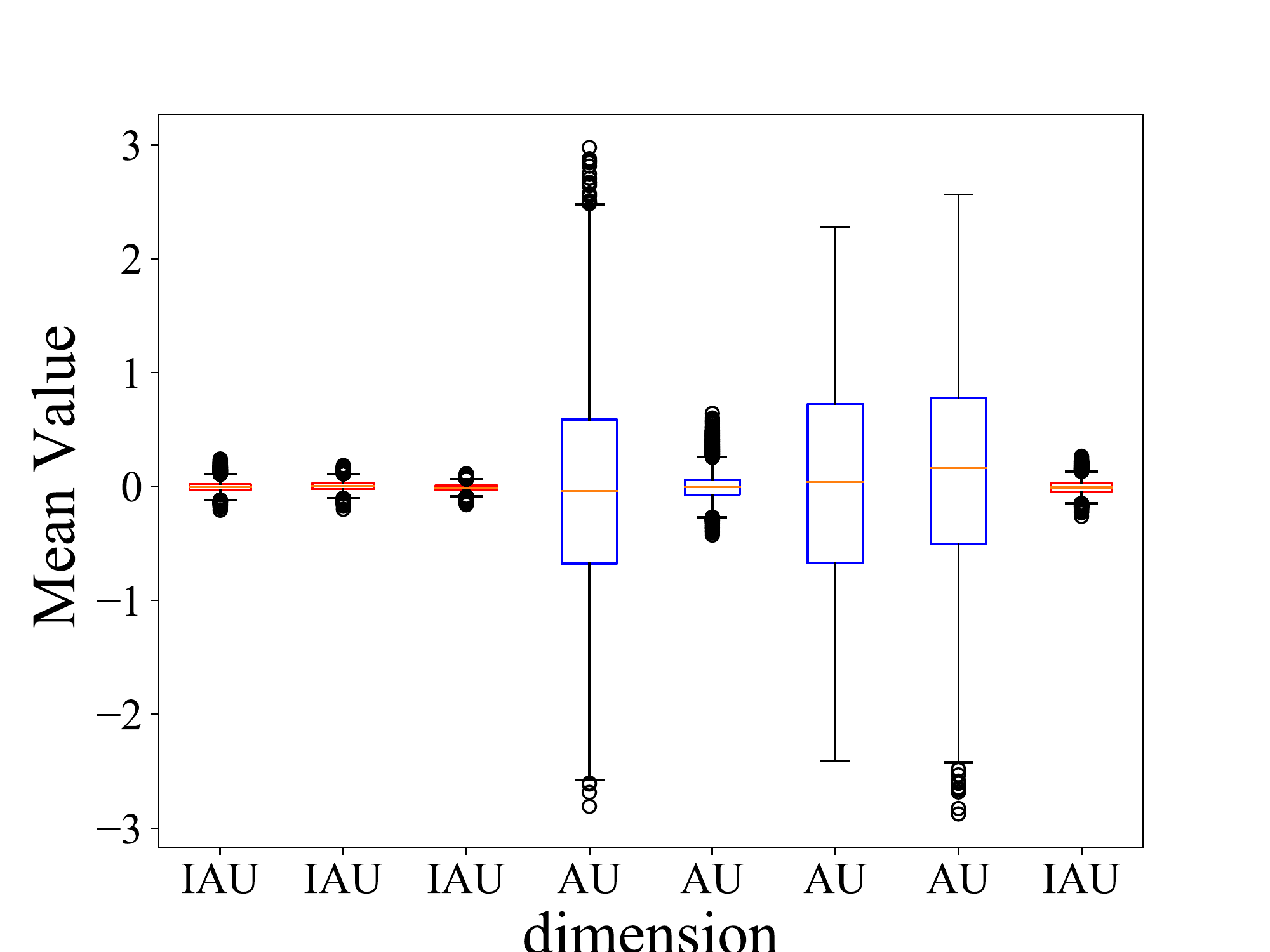}
        \includegraphics[trim=0 0.8cm 0 0, clip,width=0.23\textwidth]{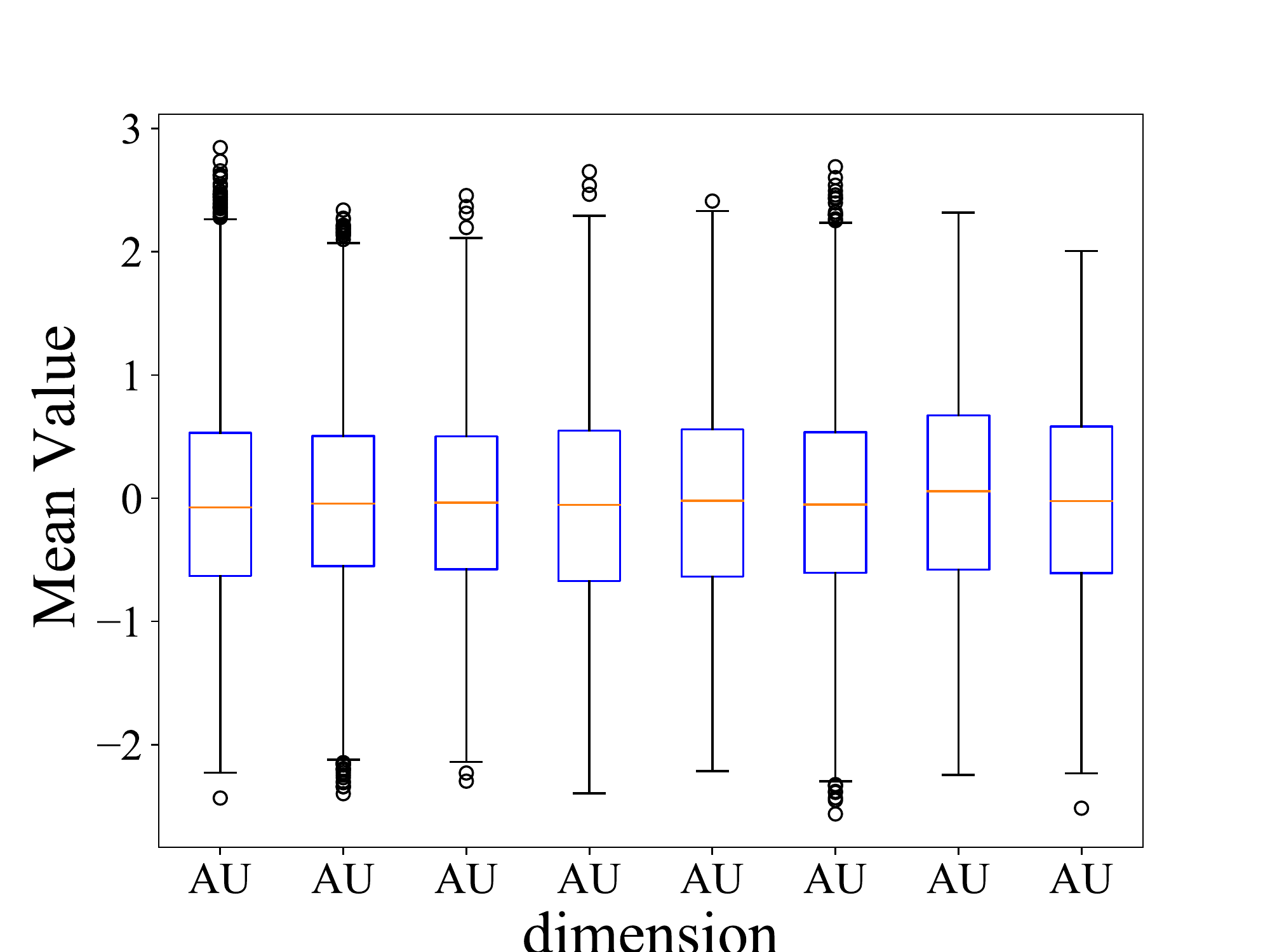}\\
        \includegraphics[width=0.23\textwidth]{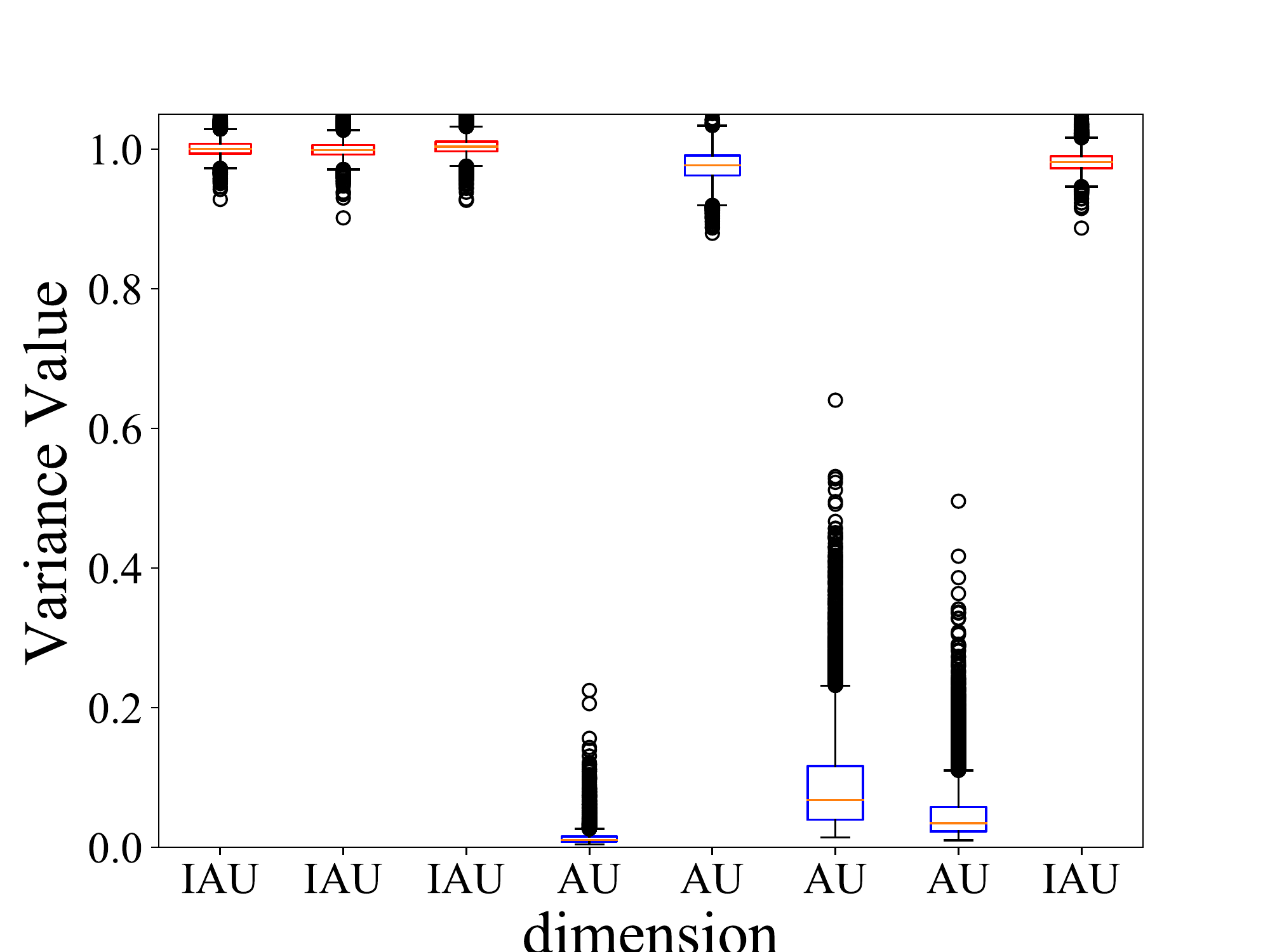}
        \includegraphics[width=0.23\textwidth]{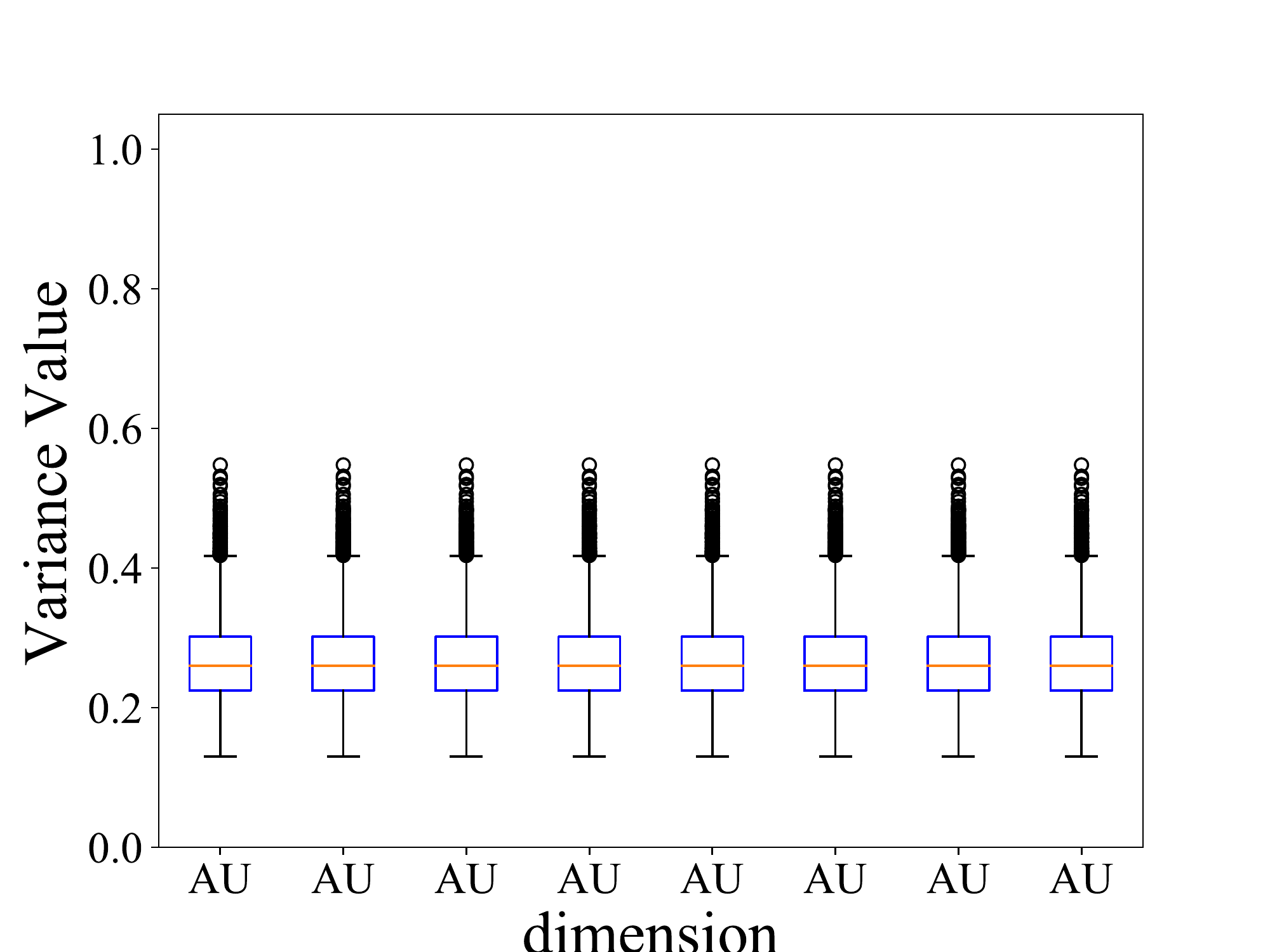}
    \caption{An observation of the latent space in VAEs trained on SNLI Corpus~\citep{bowman-etal-2015-large}. Boxplots of posteriors' mean (top) and variance (bottom) are produced on the test set. Left plot corresponds to a non-collapsed VAE (using $C=5$, Eq.~\ref{eq:mod_elbo}), right plots correspond to the same setting but with an isotropic posterior. Blue and red boxes denote results of active units (AU) and inactive units (IAU), respectively. }
    \label{fig:obser}
\end{figure}

Figure~\ref{fig:obser} (Left) reports the distribution of mean and variance for VAE (collapse is avoided using a variant of $\beta$-VAE~\cite{prokhorov2019}) with a standard Diagonal Gaussian Posterior. The empirical observation in Figure~\ref{fig:obser} suggests the uni-dimensional posterior distribution on inactive dimensions (a.k.a. Inactive Units\footnote{A dimension $u$ is defined to be active if   $A_u={\rm Cov}_{\mathbf{x}}(\mathbb{E}_{u\sim q\left(u|\mathbf{x}\right)}\left[u\right])$ is larger than 0.01, where $\rm{Cov}$ denotes covariance.} \cite{burda2015}) tend to exhibit an undesired pathological behaviour of being close to the prior distribution (e.g. variance of 1, mean of 0), meaning these dimensions do not carry any information from data. This phenomena, in the case of a collapsed VAE, corresponds to all dimensions being inactive. However, in a non-collapsed VAE, it is required from the posterior to encode information in the latent code, leading to at least a subset of  dimensions being active.

 In this paper we argue the aforementioned issues could be rectified by tying the variance of dimensions on the posterior, which would encourage encoder of VAEs towards the extremes where \emph{all} dimensions are either active or inactive. In the Gaussian case, such property could be encouraged by using an Isotropic Gaussian, corresponding to Figure~\ref{fig:obser} (Right). This only involves a simple modification of VAE by replacing its posterior with an Isotropic Gaussian distribution $\mathcal{N}(\boldsymbol{\mu}, \sigma^2\boldsymbol{I})$, which is similar to vanila VAE's posterior with the exception that all dimensions share the same unified variance. 
%
%
%

Our experimental results across text and image modalities indicate that the proposed modification consistently outperforms the vanilla VAE, and strong baseline such as IWAE~\cite{burda2015} on several quantitative and qualitative bases. We also illustrate that the proposed simple modification improves representational robustness to input perturbation, as well as sample efficiency in down stream tasks. Finally, we provide a theoretical insight about the efficiency of Isotropic VAE, as well as empirical evidence to highlight its optimisation merits.


\section{Variational Autoencoder (VAE)}\label{sec:background}
Let $\mathbf{x}$ denote datapoints in data space and $\mathbf{z}$ denote latent variables in the latent space, and assume the datapoints are generated by the combination of two random processes: The first random process is to sample a point $\mathbf{z}^{(i)}$ from the latent space in VAEs with prior distribution of $\mathbf{z}$, denoted by $p(\mathbf{z})$. The second random process is to generate a point $\mathbf{x}^{(i)}$  from the data space, denoted by $p(\mathbf{x}|\mathbf{z}^{(i)})$. VAE uses a combination of a probabilistic encoder $q_\phi(\mathbf{z}|\mathbf{x})$ and decoder $p_\theta(\mathbf{x}|\mathbf{z})$, parameterised by $\phi$ and $\theta$, to learn this statistical relationship between $\mathbf{x}$ and $\mathbf{z}$. VAE is trained by maximizing the lower bound of the logarithmic data distribution $\log p(\mathbf{x})$, called evidence lower bound (ELBO), $\mathcal{L}(\phi,\theta;\textbf{x})$:
\begin{equation}
    \label{eq:elbo}
    \mathbb{E}_{q_\phi(\mathbf{z}|\mathbf{x})}[\log(p_\theta(\mathbf{x}|\mathbf{z}))]-\textrm{KL}(q_\phi(\mathbf{z}|\mathbf{x})||p(\mathbf{z}))
\end{equation}
The first term of objective function is the expectation of the logarithm of data likelihood under the posterior distribution of $z$. The second term is KL-divergence, measuring the distance between the recognition distribution $q_\phi(\mathbf{z}|\mathbf{x})$ and the prior distribution $p(\mathbf{z})$ and can be seen as a regularisation.

\subsection{Posterior Collapse}\label{sec:nocollapse}
In the presence of auto-regressive and powerful decoders, a common optimisation challenge of training VAEs in text modelling is called posterior collapse, where the learned posterior distribution $q_\phi(\mathbf{z}|\mathbf{x})$, collapses to the prior  $p(\mathbf{z})$. Posterior collapse results in
the latent variables $\mathbf{z}$ being ignored by the decoder. Several strategies have been proposed to alleviate this problem from different angles such as choice of decoders~\citep{Yang2017,bowman2015}, adding more dependency between encoder and decoder~\citep{dieng-et-al-2019-avoiding}, adjusting the training process or objective function \citep{DBLP:journals/corr/abs-2004-14758,fu-etal-2019-cyclical,he2019,bowman2015}, and imposing direct constraints to the KL term~\citep{pelsmaeker-aziz-2020-effective,razavi2019,higgins2016}.
%
In this work, we apply constraint optimisation~\citep{burgess2018,prokhorov2019}:
\begin{multline}
\setlength{\abovedisplayskip}{3pt}
\setlength{\belowdisplayskip}{3pt}
\setlength{\abovedisplayshortskip}{3pt}
\setlength{\belowdisplayshortskip}{3pt}
    \label{eq:mod_elbo}
    \mathcal{L}(\phi,\theta;\textbf{x})=\mathbb{E}_{q_\phi(\mathbf{z}|\mathbf{x})}[\log(p_\theta(\mathbf{x}|\mathbf{z}))]-\\\beta\left|{\rm KL}(q_\phi(\mathbf{z}|\mathbf{x})||p(\mathbf{z}))-C\right|
\end{multline}
where $C$ is a positive real value which represents the target KL-divergence term value. We set $\beta=1$ to make sure the weights of the two terms balance, noting that it acts as a Lagrange Multiplier \citep{boyd2004convex}. This also has an information-theoretic interpretation, where the placed constraint $C$ on the KL term is seen as the amount of information transmitted from a sender (encoder) to a receiver (decoder) via the message~($\mathbf{z}$)~\cite{alemi2018fixing}.
\section{Isotropic Gaussian Posterior (IGP)}\label{sec:method}
A common behaviour of VAEs is the presence of inactive representation units across the entire dataset, causing the number of utilised dimensions to be even far smaller than the number of potential generative factors behind any real-world dataset. The soft ellipsoidal representation space of VAEs is known to lead to less representative mean vectors~\cite{bosc-vincent-2020-sequence}. 

We argue that tying the variance of dimensions on the posterior will avoid the aforementioned issue since the encoder of VAEs would be forced to either use all dimensions or none and the learned latent space is soft spherical. In the Gaussian case, this corresponds to using an Isotropic Gaussian, a subclass of diagonal Gaussian distribution $\left\{\mathcal{N}(\boldsymbol{\mu},\sigma^2\boldsymbol{I})|\boldsymbol{\mu}\in{\mathbf{R}^{n}},\sigma\in\mathbf{R}^+\right\}$, as the posterior. Although an AE can have the same effect, tying variances to a constant value 0 rather than a learned value makes the regularizer no use and the generation quality worse.


Tying the variances in IGP could impose a different pathological pattern on units' activation as illustrated in the right panels of Figure~\ref{fig:obser}. As demonstrated the use of IGP in a non-collapsed VAE encourages the number of active units to be pushed towards the maximum (i.e., matching the representation dimensionality). These observations generalise to other datasets and configurations of $C$. 

Additionally, the use of IGP allows the estimation of variance more accurately. Suppose we have $N$ samples with the same posterior. For a K-dimension diagonal Gaussian posterior, we will have an estimate of variance with standard deviation approximately ${\hat\sigma}_k^2\sqrt{\frac{2}{N}}$ for each dimension $k$, whereas for an isotropic Gaussian posterior, we will have a unified estimation of variance with standard deviation approximately ${\hat\sigma}^2\sqrt{ \frac{2}{NK}}$, 
where ${\hat\sigma}_k^2$ and ${\hat\sigma}^2$ denote the estimates of the variance. 
Moreover, with $K$ different ${\hat\sigma}_k^2$ estimates, a few may differ
substantially from their best values by chance.
\section{Experiments}\label{sec:results}
\begin{figure*}[t]
    \centering
    \begin{subfigure}[b]{0.48\textwidth}
        \centering
        \includegraphics[trim=0 8mm 9.5cm 0, clip,scale=0.6]{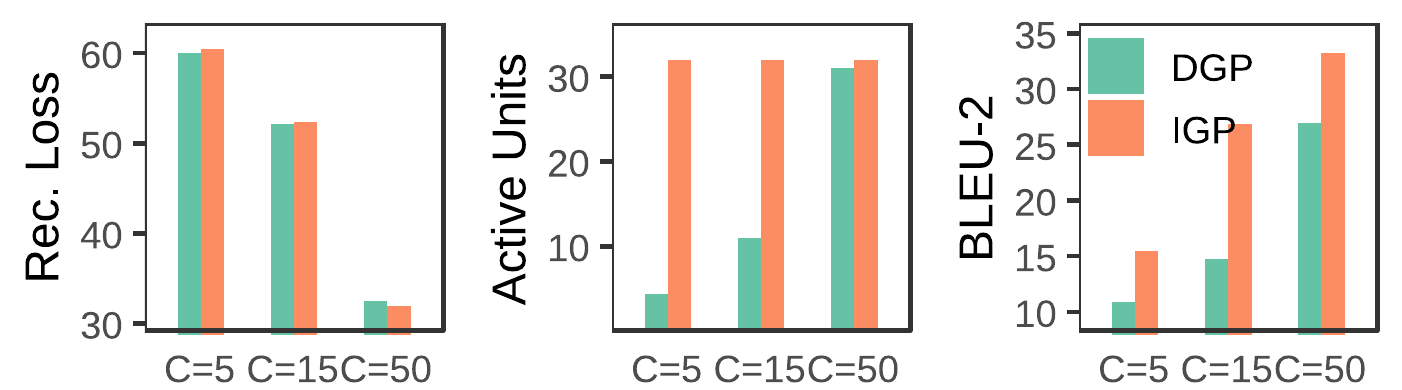}
        \caption{CBT Corpus~\citep{HillBCW15} }
        \label{fig:cbt_result}
    \end{subfigure}\hfill
    \begin{subfigure}[b]{0.48\textwidth}
        \centering
        \includegraphics[trim=0 8mm 9.5cm 0, clip,scale=0.6]{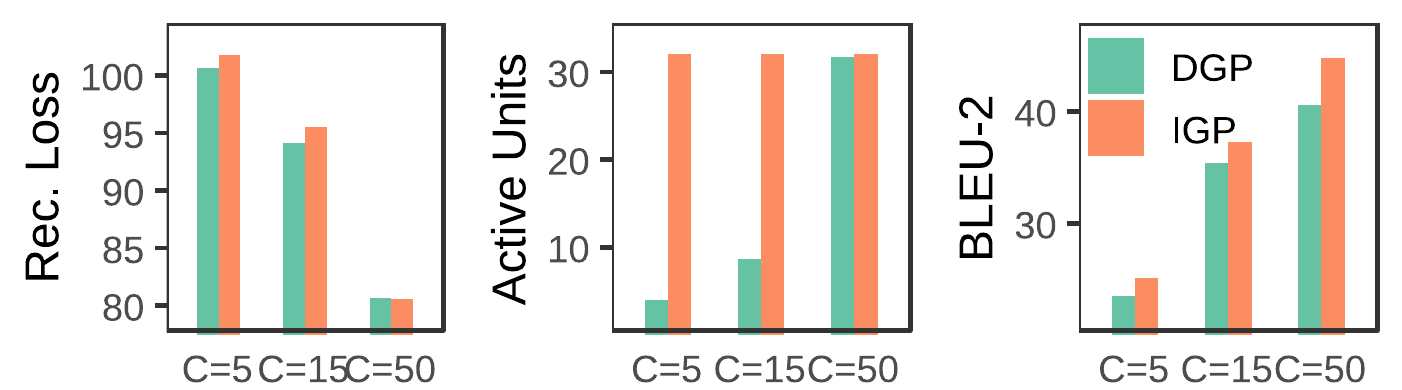}
        \caption{DBpedia Corpus~\citep{10.5555/2969239.2969312}}
        \label{fig:dbpedia_result}
    \end{subfigure}
    \begin{subfigure}[b]{0.48\textwidth}
        \centering
        \includegraphics[trim=0 8mm 9.5cm 0, clip,scale=0.6]{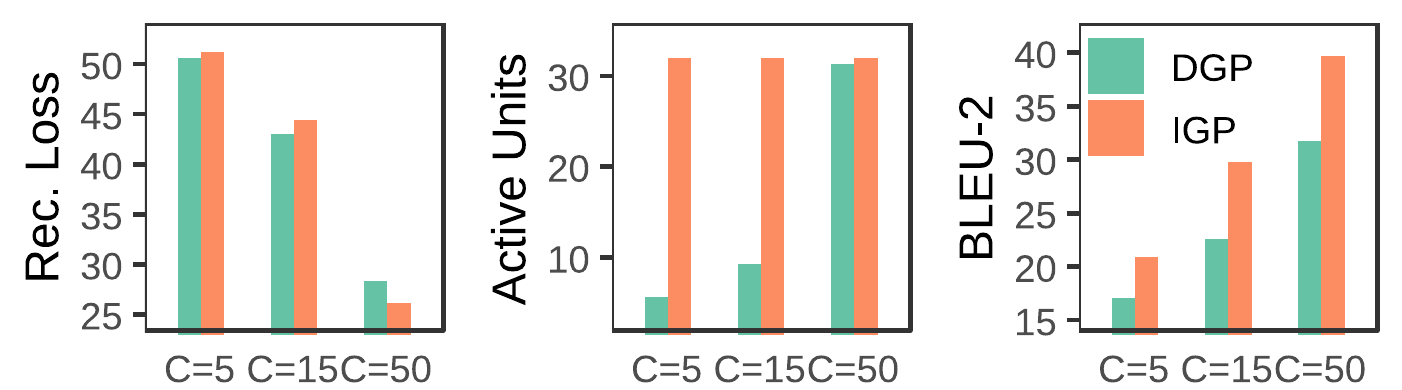}
        \caption{Yahoo Question Corpus~\citep{10.5555/2969239.2969312}}
        \label{fig:yahoo_result}
    \end{subfigure}\hfill
    \begin{subfigure}[b]{0.48\textwidth}
        \centering
        \includegraphics[trim=0 8mm 9.5cm 0, clip,scale=0.6]{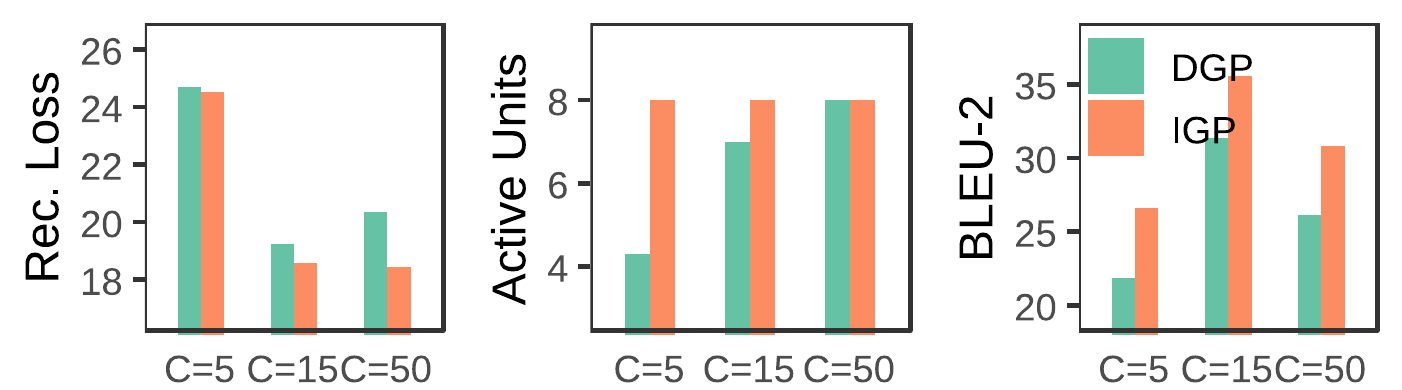}
        \caption{SNLI Hypothesis Corpus~\citep{bowman-etal-2015-large}}
        \label{fig:snli_result}
    \end{subfigure}
    \caption{Results are calculated on the test set (average of 3 runs reported). Rec. Loss, AU, DGP, and IGP denote reconstruction loss, number of Active Units, vanilla VAE, and VAE with isotropic Gaussian posteriors, respectively. AU is bounded by the dimensionality of $z$ (32 for CBT, DBpedia, Yahoo Question, and 8 for SNLI). KL for both IGP and DGP roughly matches the set target via $C$. For full results see \emph{Supplementary Material}.}
    \label{fig:loss}
\end{figure*}

We trained our models on four datasets, CBT~\citep{HillBCW15}, Yahoo Question and DBpedia~\citep{10.5555/2969239.2969312}, and SNLI Hypothesis \citep{bowman-etal-2015-large}. See Table~\ref{tab:data} for data statistics. We compare the choice of isotropic Gaussian posterior (IGP) with vanilla diagonal Gaussian posterior (DGP) on various quantitative and qualitative settings, from reconstruction and representation utilisation (\S\ref{sec:basic}) to downstream classification task and sample efficiency (\S\ref{sec:classification}). We also provide additional analysis about the representational properties encouraged by IGP, showing it works well with other priors, and could also be used as a warm initialiser to boost performance of vanilla VAEs (\S\ref{sec:analysis}).

\paragraph{Model Configurations.} We use the VAE architecture of \citep{bowman2015} and concatenate the latent code with word embedding at every timestamp as the input of the decoder. For VAE with IGP, we just produce one variance value and assign it to be the variance of posterior for all dimensions. At decoding phase, we use greedy decoding. The dimensions for word embedding, encoder-decoder LSTMs, and latent code are (200, 512, 32) for CBT, Yahoo and DBpedia, and (50, 128, 8) for SNLI. Three different values of $C$ (see Eq.~\ref{eq:mod_elbo}) is used on each dataset to explore the impact of the amount of information transmitted by the code. We also adopt Autoencoder (AE) as a baseline. \footnote{We also try to incorporate Importance Weighted Autoencoder (IWAE;\citet{burda2015}) as another baseline. However, this model still has the KL-collapse issue, which makes it unsuitable for comparison in this setting.} All models are trained from 3 random starts for 20 epochs and 128 batch size using Adam~\citep{kingma2014} with learning rate 0.0005.

\subsection{Basic Findings}\label{sec:basic}
Figure~\ref{fig:loss} reports the reconstruction loss, active units (AU; \citealp{burda2015}) and BLEU-2~\citep{bleu} for 3 settings of $C=5,15,50$ on 4 datasets. KL in all cases match the set target C. For full results, including KL, BLEU-4 and ROUGE-2/4~\citep{lin-2004-rouge}, see \emph{Supplementary Material}. 

\paragraph{Target $C$.} The usage of $C$ can effectively control the KL-divergence to a certain level. The reconstruction loss generally drops with the increase of $C$ value. We observe the  same pattern, comparing the loss for VAE (DGP) and VAE with IGP under each $C$ value. While the increase of $C$ is interpreted as the amount of information transmitted through the representation~\citep{alemi2018fixing,prokhorov2019}, the use of IGP allows for a better utilisation of the latent space and consequently a more accurate estimation of the posterior. 
\begin{table}[t]
    \centering
    \small
    \begin{tabular}{l c c c c c}
        \toprule
        Data & Train & Dev & Test & Vocab & \# Classes \\
        \hline
        CBT & 192K & 10K & 12K & 12K & -\\
        DBpedia & 140K & 14K & 14K & 12K & 14\\
        Yahoo Question & 100K & 10K & 10K & 12K & 10\\
        SNLI & 100K & 10K & 10K & 7K & -\\
        \bottomrule
    \end{tabular}
    \caption{Statistics of datasets.}
    \label{tab:data}
\end{table}
\noindent\textbf{Active unit (AU).} AU measures the activity of dimensions in the representation space. Intuitively, if the changes of the mean value of posterior distribution on a dimension across all data points in the training set are significant, the corresponding dimension is active and carries information. As can be observed, compared to the diagonal Gaussian counterpart, IGP can activate significantly more dimensions, almost all dimensions in many cases.

\noindent\textbf{BLEU-2.}
VAEs with IGP significantly and consistently outperform vanilla VAEs. Having a comparable BLEU-2 on CBT dataset for IGP ($C=5$) vs. DGP ($C=15$), highlights the potential of IGP even at a lower $C$ value: 15.45 vs 14.76.

\noindent\textbf{Does IGP work only with $C$?}
We chose setting a target KL via $C$ as our strategy to avoid posterior collapse. While this strategy offers an information-theoretic interpretation of VAE behaviours, we still want to examine if our proposed IGP could work with another commonly used method, $\beta$-VAE~\citep{higgins2016} which incorporates a $\beta$ coefficient for the KL term. We train VAEs on SNLI with the same training configuration as before and three different $\beta$. The results are reported in Table~\ref{tab:beta}. Although non-comparable KL values makes it difficult to fairly compare models, we still observe IGP VAEs can activate more dimensions, and have better reconstruction quality with a rather smaller KL-divergence than DGP VAEs (e.g., IGP VAEs ($\beta=0.4$) and DGP VAEs ($\beta=0.2$)).
\begin{table}[t]
    \centering
    \begin{tabular}{l c c c c}
        \toprule
        Model & Rec. & KL & AU & BLEU-2/4\\
        \toprule
        $\beta=0.8$, DGP & 28.14 & 0.75 & 2.0 & 12.74/3.33\\
        $\beta=0.8$, IGP & 28.52 & 0.27 & 7.3 & 11.18/2.92\\
        \cdashlinelr{1-5}
        $\beta=0.4$, DGP & 22.72 & 7.66 & 5.7 & 22.43/9.90\\
        $\beta=0.4$, IGP & 20.84 & 10.28 & 8.0 & 31.48/17.61\\
        \cdashlinelr{1-5}
        $\beta=0.2$, DGP & 18.93 & 14.71 & 7.7 & 30.53/18.24\\
        $\beta=0.2$, IGP & 16.38 & 19.04 & 8.0 & 36.72/24.87\\
        \bottomrule
    \end{tabular}
    \caption{$\beta$-VAE results of SNLI test set. We report mean across 3 runs for space reason.}
    \label{tab:beta}
\end{table}
\subsection{Classification}\label{sec:classification}
We froze the encoder from trained VAEs and AEs, and place a classifier on top to use the mean vector representations from the encoder as a feature to train a classifier. We split representations into training and test set with proportion 80\% and 20\%, and used  20\% of training data as validation set. For the classifier, we used a 2-hidden-layer MLP with 128 neurons and ReLU activation function at each layer. Classifiers were optimised with Adam (learning rate 0.001) for 20 epochs. We trained 10 randomly initialised classifiers and use the mean of classification accuracy as the final accuracy of one VAE or AE encoder. We report the mean and standard deviation of AEs and VAEs with three settings of $C$ in Figure~\ref{fig:clf} (top).

\paragraph{Results.} Overall, the representations of most VAEs with IGP lead to a significant improvement of classification accuracy compared to vanilla VAEs. In the only exception (i.e, $C=5$ on DBpedia), two models have comparable results with no model having any statistically significant advantage. We attribute this success to having a more representative mean which is encouraged by IGP. One notable thing is that VAEs with diagonal Gaussian posterior do not perform as good as AEs regardless of $C$ choice, whereas IGP VAEs ($C=15,50$) achieve similar and better classification accuracy on DBpedia and Yahoo Question, respectively. 
\begin{figure}[t]
    \centering
    \includegraphics[scale=0.43]{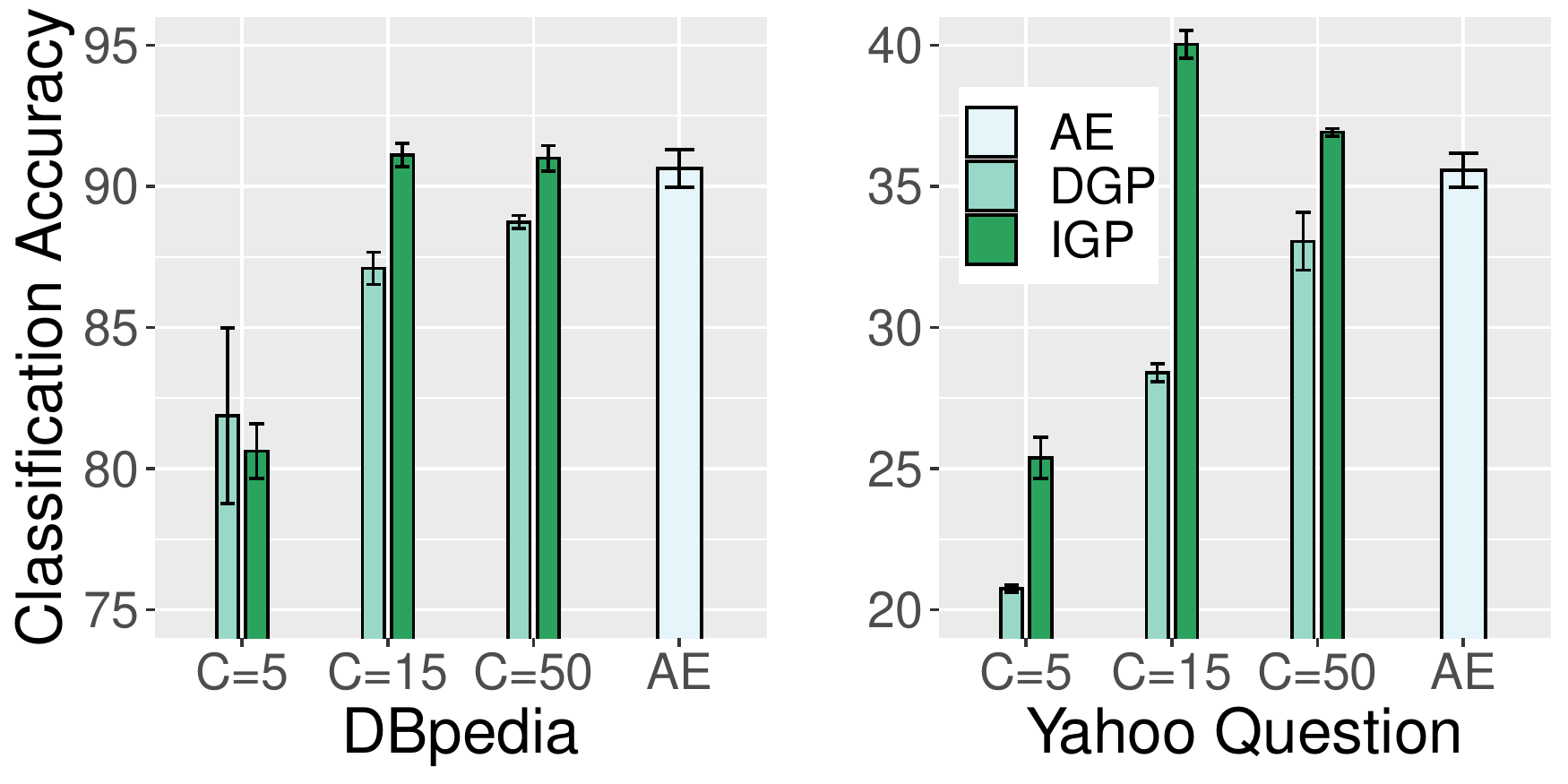}
    \includegraphics[scale=0.43]{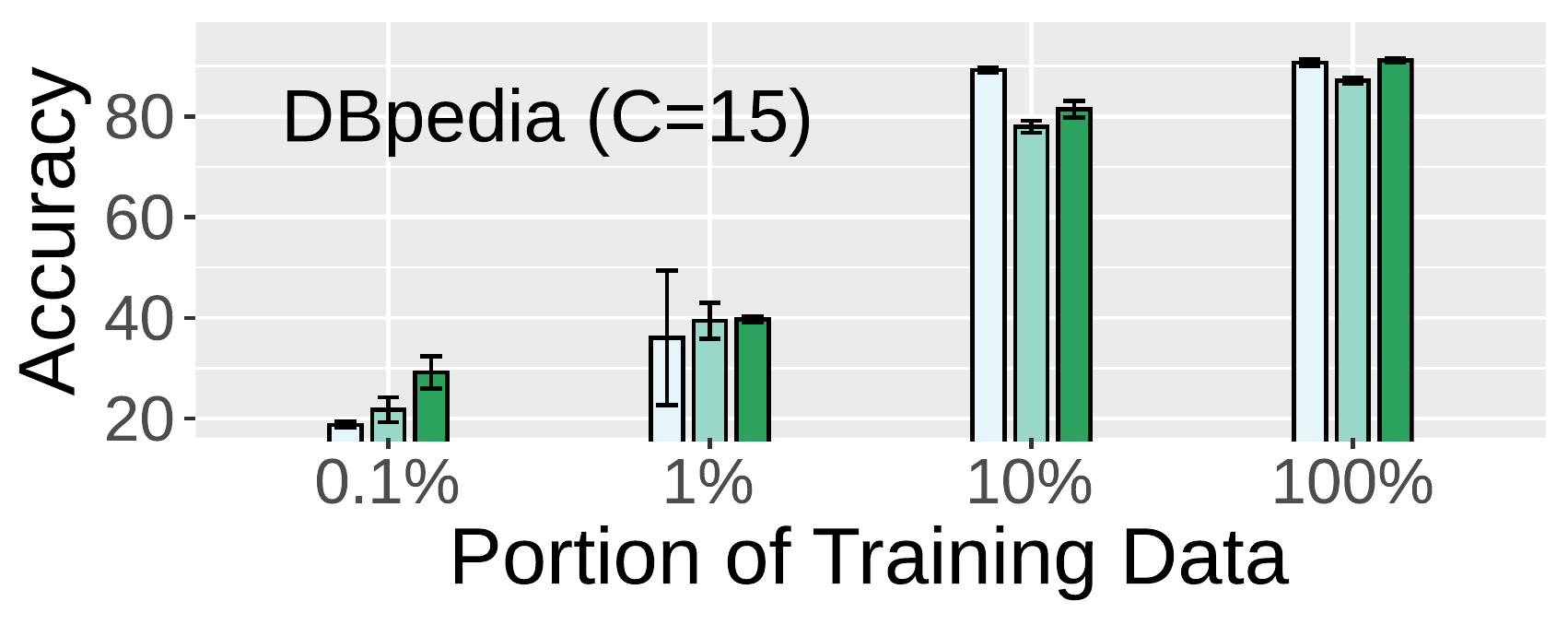}
    \caption{Classification accuracy on DBpedia (top-left) and Yahoo (top-left) with and without the isotropic Gaussian posterior (IGP) under different $C$ values. Also, classification accuracy for $C=15$ trained on various portion of DBpedia (bottom). Results are reported as mean and std across 3 VAE encoders.}
    \label{fig:clf}
\end{figure}
\begin{figure}
    \centering
    \includegraphics[trim=10mm 0 3mm 0, clip,width=0.24\textwidth]{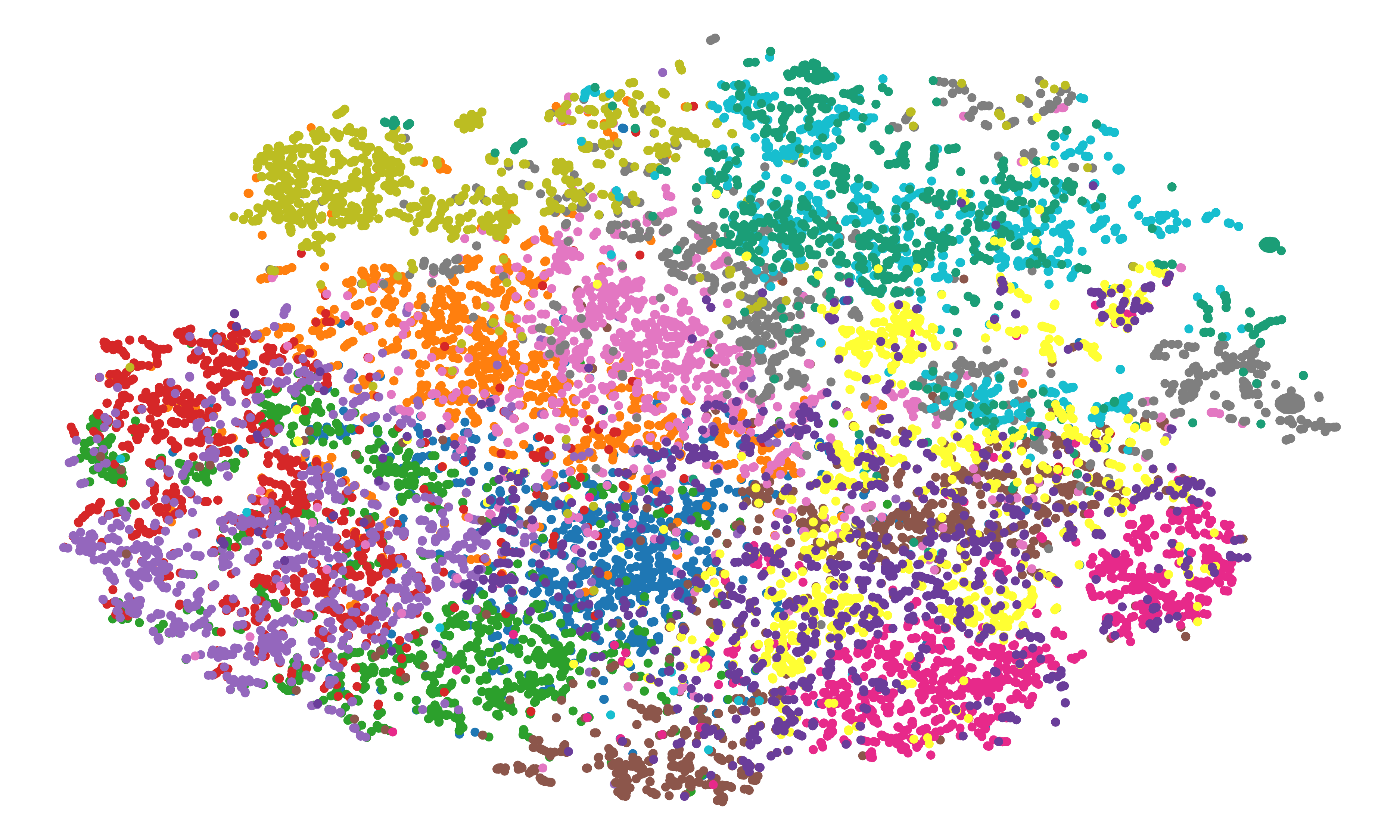}
    \hskip -2ex
    \includegraphics[trim=10mm 0 3mm 0, clip,width=0.24\textwidth]{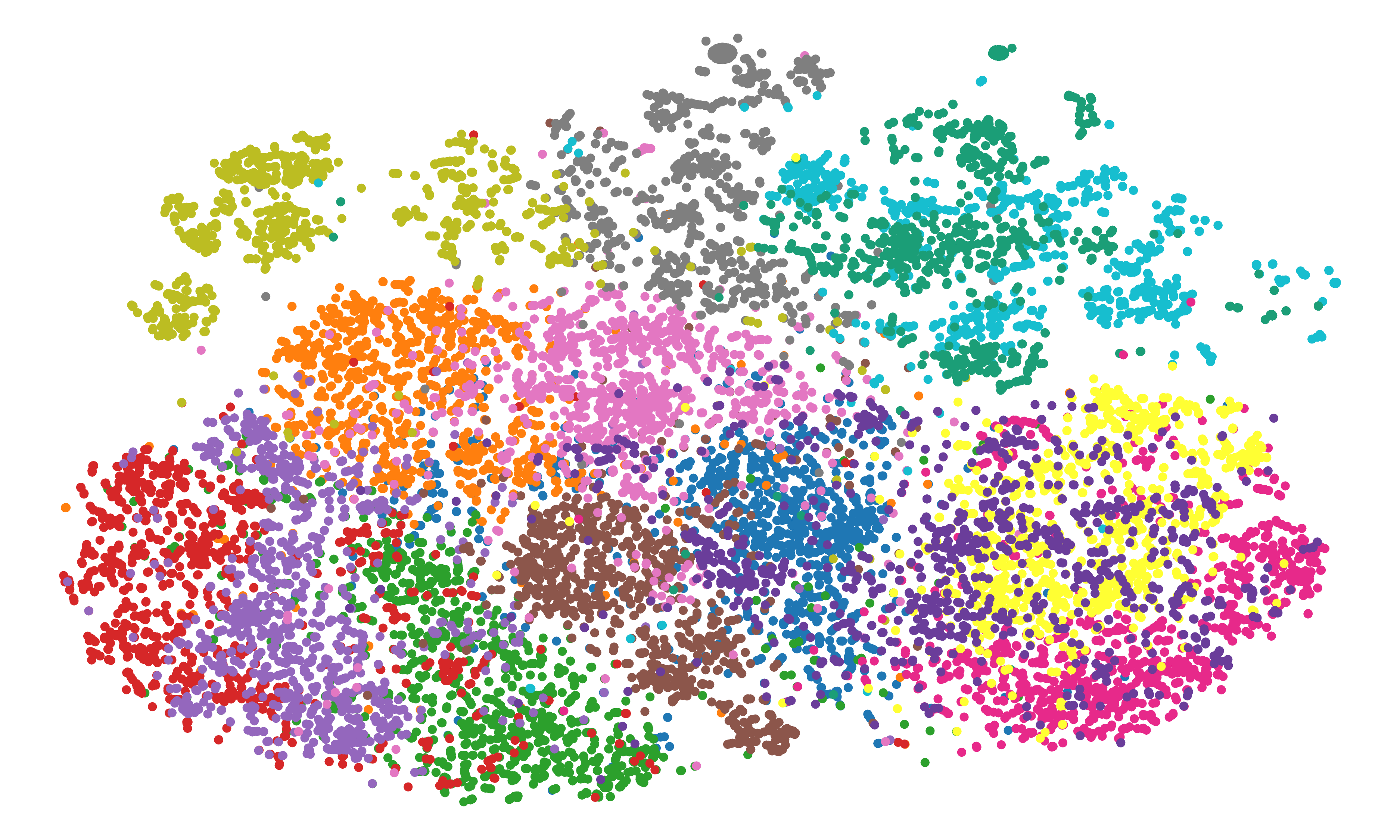}
    \includegraphics[trim=5.3cm 16.2cm 0.6cm 0.6cm, clip,scale=0.18]{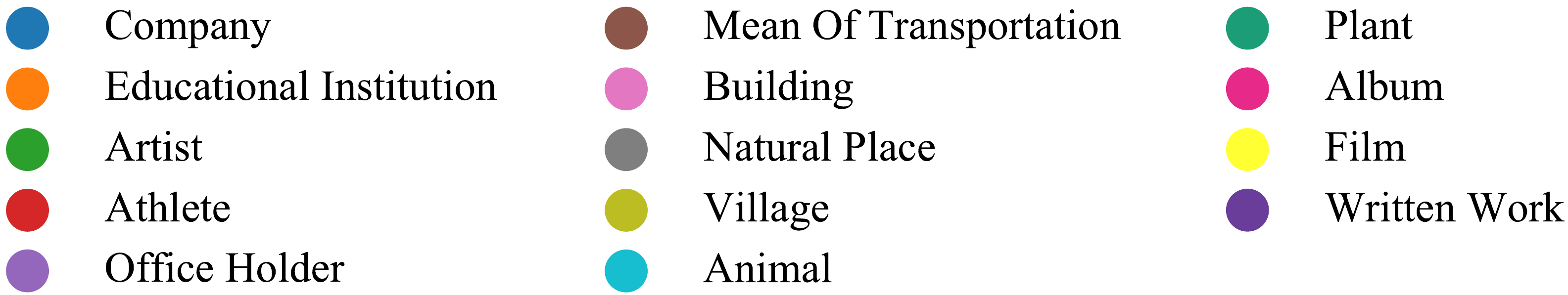}
    \caption{The t-sne plots of sentence representations of DBpedia test set for $C=15$. \textbf{Left:} DGP; \textbf{Right:} IGP.}
    \label{fig:tsne}
\end{figure}

\paragraph{Visualization of the Representation Space.} We use t-sne~\cite{JMLR:v9:vandermaaten08a} to visualize the learned representation space of VAE and VAE with IGP (both with $C=15$) for DBpedia (14 classes) in Figure~\ref{fig:tsne}. As illustrated in the right plot, the clusters of classes are more concentrated in the space of IGP VAE with less overlap among classes. For instance, in the representation space of vanilla VAE (left plot), Artist, Athlete, Office Holder classes are entangled, whereas in IGP VAE, representations of these three classes show a clear clustering pattern (compare the green, red, purple points at the bottom-left of both t-sne plots). The better separation of classes in the representation space (achieved fully unsupervised) explains the superior performance of IGP when further trained in the classification task.

\paragraph{Few-Shot Setting.} We adopt few-shot setting to compare sample efficiency of both VAEs, by using 0.1\%, 1\% and 10\% of training data of DBpedia to them with $C=15$ and do classification on the test set as before. Accuracy scores are reported in Figure~\ref{fig:clf} (bottom). Using VAE with IGP in a few-shot setting can lead to a  better performance on this classification task. For instance, the mean accuracy gap at 0.1\% is quite significant being above 7 points, and VAE  gets the gap down to 4 points at 100\% (still significant). 
\begin{table}[t]
    \centering
    \begin{tabular}{l l c c}
        \toprule
        & Model & \textbf{DBpedia} & \textbf{Yahoo Question}\\
        \hline
        \textbf{
          \parbox[t]{3mm}{\multirow{8}{*}{\rotatebox[origin=c]{90}{Agreement}}}}
        & AE & $\text{91.85}_{\pm\text{0.68}}$ & $64.77_{\pm0.59}$\\
        \cdashlinelr{2-4}
        & $C=5$, DGP & $53.79_{\pm3.83}$ & $24.33_{\pm0.72}$\\
        & $C=5$, IGP & $46.62_{\pm0.51}$ & $28.75_{\pm0.48}$\\
        \cdashlinelr{2-4}
        & $C=15$, DGP & $80.45_{\pm1.71}$ & $35.89_{\pm0.95}$\\
        & $C=15$, IGP & $84.24_{\pm0.74}$ & $54.42_{\pm0.21}$\\
        \cdashlinelr{2-4}
        & $C=50$, DGP & $89.40_{\pm0.37}$ & $51.09_{\pm0.33}$\\
        & $C=50$, IGP & $\mathbf{93.48_{\pm0.37}}$ & $\mathbf{64.88_{\pm0.37}}$\\
        \hline
        \textbf{
          \parbox[t]{3mm}{\multirow{8}{*}{\rotatebox[origin=c]{90}{Robustness}}}}
        & AE & $83.09_{\pm0.81}$ & $30.68_{\pm0.32}$\\
        \cdashlinelr{2-4}
        & $C=5$, DGP & $68.77_{\pm4.76}$ & $18.47_{\pm0.50}$\\
        & $C=5$, IGP & $72.46_{\pm1.00}$ & $22.28_{\pm0.30}$\\
        \cdashlinelr{2-4}
        & $C=15$, DGP & $76.42_{\pm1.18}$ & $24.08_{\pm0.28}$\\
        & $C=15$, IGP & $\mathbf{83.49_{\pm0.63}}$ & $\mathbf{33.99_{\pm0.20}}$\\
        \cdashlinelr{2-4}
        & $C=50$, DGP & $79.18_{\pm0.38}$ & $28.29_{\pm1.02}$\\
        & $C=50$, IGP & $83.04_{\pm0.70}$ & $32.16_{\pm0.19}$\\
        
        \bottomrule
    \end{tabular}
    \caption{Robustness and Proportion of reserved label information on test sets.}
    \label{tab:agreement}
\end{table}

\paragraph{Reserved Label Information.} We conduct the agreement experiment \citep{bosc-vincent-2020-sequence}, extrinsically examining the amount of label information preserved during the reconstruction phase. We first train a regular text classifier with a 512D LSTM and two hidden MLP layers with 128 neurons and ReLU activation function each on top on the training set of DBpedia and Yahoo using the same training set as VAEs. We then calculate the macro-averaged F1-scores of reconstructed test sets and divide them with the the macro-averaged F1-score of the original test set to represent the proportion of label information reserved in reconstruction. The results are provided in Table~\ref{tab:agreement}. 

The reconstruction of IGP variant generally has kept more label information than their DGP counterparts. Surprisingly, for $C=5$, vanilla VAEs can have more label information, which shows metrics measuring reconstruction quality (e.g., measured by BLEU) do not quantify the reserved information. For full results, see \emph{Supplementary Material}.

\paragraph{Robustness.} We further apply word dropout on sentences by randomly deleting 30\% of words in a sentence, and repeating the classification experiment. The classification accuracy on the representation of polluted sentences are reported in Table~\ref{tab:agreement}. IGP VAEs outperform DGP VAEs in all cases with up-to 9\% absolute gain. Moreover, with $C\geq 15$, IGP VAEs are more robust to polluted sentences than AEs. This further suggests the superiority of IGP VAEs on classification tasks.

\begin{table}[t]
    \centering
    \small
    \begin{tabular}{ll c c}
        \toprule
        & Model & \textbf{Forward} & \textbf{Reverse}\\
        \hline
        \textbf{
        \parbox[t]{3mm}{\multirow{10}{*}{\rotatebox[origin=c]{90}{CBT}}}} 
        & Real & - & 71.88\\\cdashlinelr{2-4}
        & AE & $308.12_{\pm7.99}$ & $254.52_{\pm10.06}$\\\cdashlinelr{2-4}
        & $C=5$, DGP & $8.37_{\pm0.13}$ & $4795.16_{\pm473.99}$\\
        & $C=5$, IGP & $8.29_{\pm0.11}$ & $2117.98_{\pm164.36}$\\\cdashlinelr{2-4}
        & $C=15$, DGP & $15.06_{\pm0.48}$ & $730.28_{\pm41.52}$\\
        & $C=15$, IGP & $14.08_{\pm0.07}$ & $498.67_{\pm5.16}$\\\cdashlinelr{2-4}
        & $C=50$, DGP & $51.25_{\pm0.07}$ & $254.65_{\pm20.09}$\\
        & $C=50$, IGP & $52.20_{\pm0.81}$ & $191.39_{\pm6.70}$\\
        \hline
        \textbf{
        \parbox[t]{3mm}{\multirow{10}{*}{\rotatebox[origin=c]{90}{DBpedia}}}}
        & Real & - & 35.22\\\cdashlinelr{2-4}
        & AE & $95.95_{\pm16.60}$ & $139.80_{\pm3.97}$\\\cdashlinelr{2-4}
        & $C=5$, DGP & $3.60_{\pm0.10}$ & $4524.95_{\pm396.03}$\\
        & $C=5$, IGP & $2.74_{\pm0.03}$ & $4365.36_{\pm319.29}$\\\cdashlinelr{2-4}
        & $C=15$, DGP & $5.86_{\pm0.15}$ & $576.93_{\pm53.78}$\\
        & $C=15$, IGP & $4.47_{\pm0.03}$ & $767.13_{\pm51.00}$\\\cdashlinelr{2-4}
        & $C=50$, DGP & $15.31_{\pm0.20}$ & $158.92_{\pm9.47}$\\
        & $C=50$, IGP & $14.85_{\pm0.09}$ & $146.44_{\pm3.13}$\\
        \hline
        \textbf{
        \parbox[t]{3mm}{\multirow{10}{*}{\rotatebox[origin=c]{90}{Yahoo Question}}}} 
        & Real & - & 113.08\\\cdashlinelr{2-4}
        & AE & $418.67_{\pm58.55}$ & $605.59_{\pm78.57}$\\\cdashlinelr{2-4}
        & $C=5$, DGP & $6.96_{\pm0.27}$ & $8984.25_{\pm1591.50}$\\
        & $C=5$, IGP & $6.68_{\pm0.13}$ & $5317.33_{\pm279.97}$\\\cdashlinelr{2-4}
        & $C=15$, DGP & $17.65_{\pm0.46}$ & $1115.01_{\pm71.37}$\\
        & $C=15$, IGP & $13.23_{\pm0.18}$ & $1082.78_{\pm15.91}$\\\cdashlinelr{2-4}
        & $C=50$, DGP & $66.96_{\pm1.27}$ & $455.84_{\pm11.40}$\\
        & $C=50$, IGP & $76.95_{\pm2.11}$ & $312.53_{\pm9.30}$\\
        \bottomrule
    \end{tabular}
    \caption{Forward and Reverse perplexities.}
    \label{tab:perplexity}
\end{table}
\begin{table*}[t]
\small
    \centering
    \begin{tabular}{lp{0.88\textwidth}}
        \toprule
        \textsc{Original} &the carnegie library in unk washington is a building from 1911 . it was listed on the national register of historic places in 1982 .\\\cdashline{2-2}
        \textsc{Imputed}&the carnegie library in unk washington $\cdots$\\\cdashline{2-2}
        \textsc{DGP}&the carnegie library in unk washington is a unk ( unk ft ) high school in the unk district of unk in the province of unk in the unk province of armenia . \\\cdashline{2-2}
        \textsc{IGP}&the carnegie library in unk washington was built in 1909 . it was listed on the national register of historic places in unk was designed by architect john unk .\\\bottomrule
        \textsc{Original} &st. marys catholic high school is a private roman catholic high school in phoenix arizona . it is located in the roman catholic diocese of phoenix .\\\cdashline{2-2}
        \textsc{Imputed}&st. marys catholic high school is $\cdots$\\\cdashline{2-2}
        \textsc{DGP}&st. marys catholic high school is a unk - unk school in unk unk county new jersey united states . the school is part of the unk independent school district . \\\cdashline{2-2}
        \textsc{IGP}&st. marys catholic high school is a private roman catholic high school in unk california . it is located in the roman catholic diocese of unk . \\
        \bottomrule
    \end{tabular}
    \caption{Word imputation experiment on DBpedia test set.}
    \label{tab:rec}
\end{table*}
\subsection{Generation}\label{sec:generation}
We quantitatively evaluate the quality of generated sentences via forward perplexity and reverse perplexity experiments~\citep{pmlr-v80-zhao18b}. We first generate 100K sentences using standard Gaussian distribution. Then, we use a LSTM layer with 512 dimensions and a 200-dimension word embedding layer to form a common language model. We train the language model on the original training set and calculate standard forward perplexity on generated sentences of models. The reverse perplexity is calculated on the test set based on the language model trained on generated sentences. All number of training epochs is set to 10 and results are provided in Table~\ref{tab:perplexity}. We also include the standard perplexity on test set.

In most cases, IGP VAEs have both lower forward and reverse perplexity than DGP VAEs, which indicates their better generation quality. For the forward perplexity, both IGP and DGP VAEs have a better quality than AEs. This validates the superiority of VAE over AE in generation. However, for the reverse perplexity, VAEs are not necessarily better than AEs and those with lower $C$ value usually have significantly larger perplexity. The explanation might be that the limited capacity indicated by lower $C$ prevent VAEs to learn richer features for modelling sentences in the latent space.

\noindent\textbf{Imputing Missing Words.} We impute \%75 of words of a sentence from the test set of DBpedia, feed it to the VAE encoder and reconstruct sentence from the latent code to compare the ability of sentence completion for IGP and DGP models in~Table~\ref{tab:rec}. IGP VAE can recover more semantic and syntactic information of original sentences from the imputed sentences. In both cases, IGP VAE successfully recovers the type of the mentioned object and complete the imputed sentence with a similar structure, whereas DGP VAE fails to do so.

\begin{table}[t]
\centering
    \begin{tabular}{l l *{4}{c}}
        \toprule
        & Model & Rec. & KL & AU & CLS\\
        \hline
        & AE & 211.10 & - & 32 & 81.89\%\\
        \hline
        \textbf{
        \parbox[t]{2.5mm}{\multirow{4}{*}{\rotatebox[origin=c]{90}{DGP}}}} 
        & VAE & 227.14 & 13.33 & 8 & 77.83\%\\
        & IWAE ($k=1$) & 227.44 & 13.35 & 8 & 77.61\%\\
        & IWAE ($k=5$) & 230.47 & 11.87 & 13 & 77.57\%\\
        & IWAE ($k=50$) & 246.58 & 10.04 & 13 & 77.78\%\\
        \hline
        \textbf{
        \parbox[t]{2.5mm}{\multirow{4}{*}{\rotatebox[origin=c]{90}{IGP}}}}
        & VAE & 229.92 & 20.14 & 32 & 82.39\%\\
        & IWAE ($k=1$) & 229.75 & 20.33 & 32 & 83.18\%\\
        & IWAE ($k=5$) & 246.22 & 14.75 & 32 & 82.37\%\\
        & IWAE ($k=50$) & 304.62 & 8.82 & 32 & 80.01\%\\
        \bottomrule
    \end{tabular}
    \caption{Reconstruction losses, KL, unit activation, and classification accuracy (CLS) of IGP and DGP models on Fashion-MNIST dataset. All results are calculated on test set.}
    \label{tab:image_loss}
\end{table}
\begin{figure*}[t]
    \centering
    \includegraphics[trim=0 40mm 0 40mm, clip,scale=0.15]{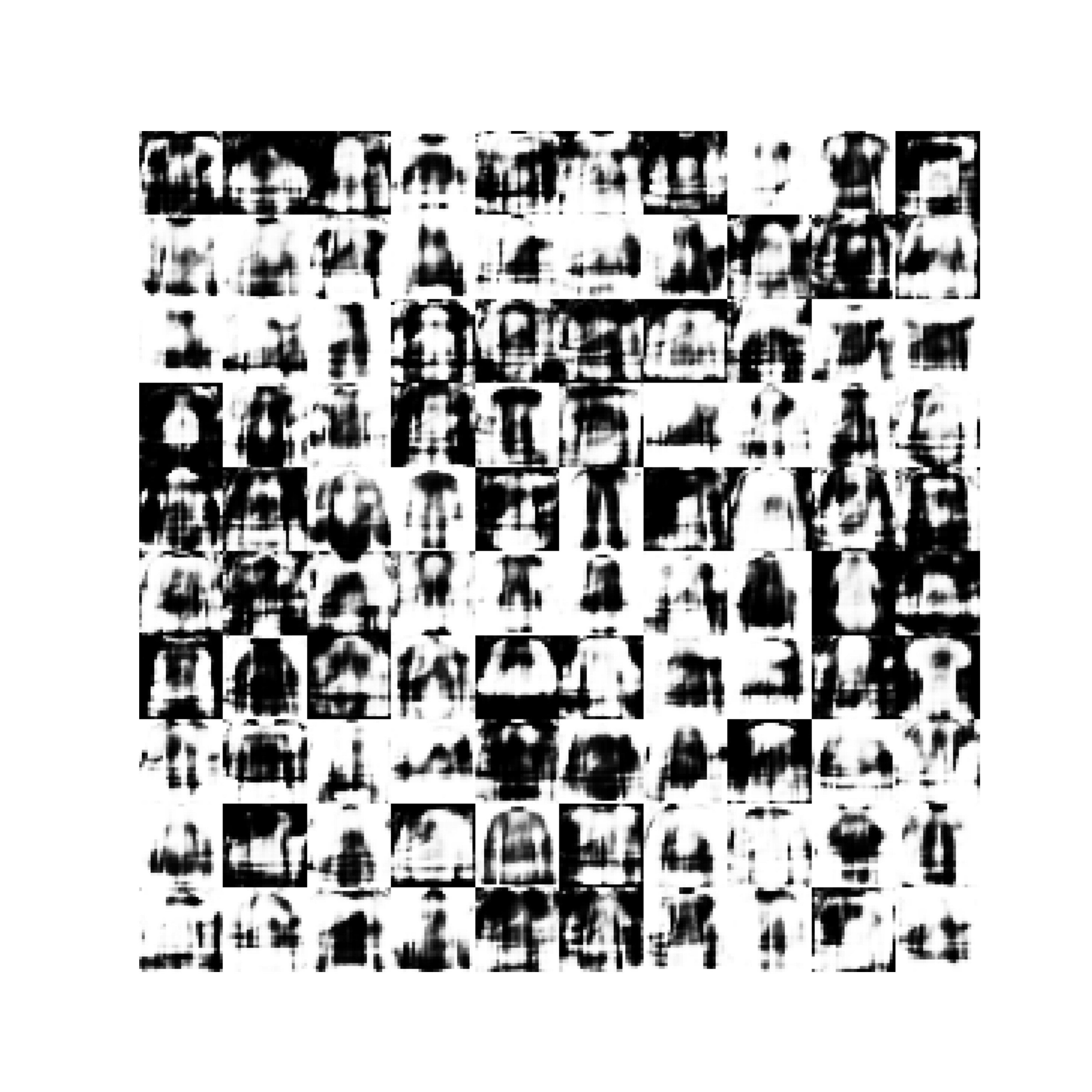}
    \includegraphics[trim=0 40mm 0 40mm, clip,scale=0.15]{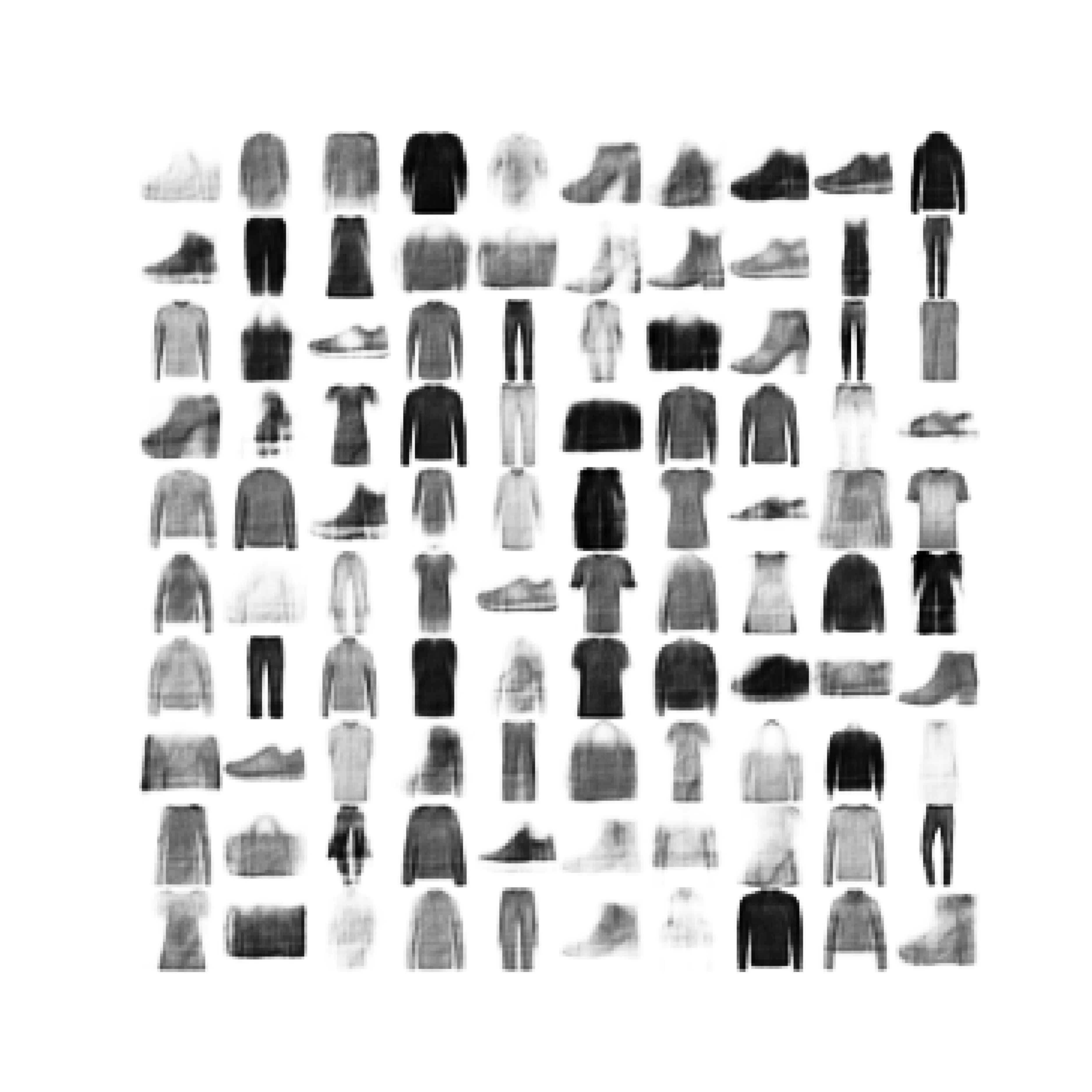}
    \includegraphics[trim=0 40mm 0 40mm, clip,scale=0.15]{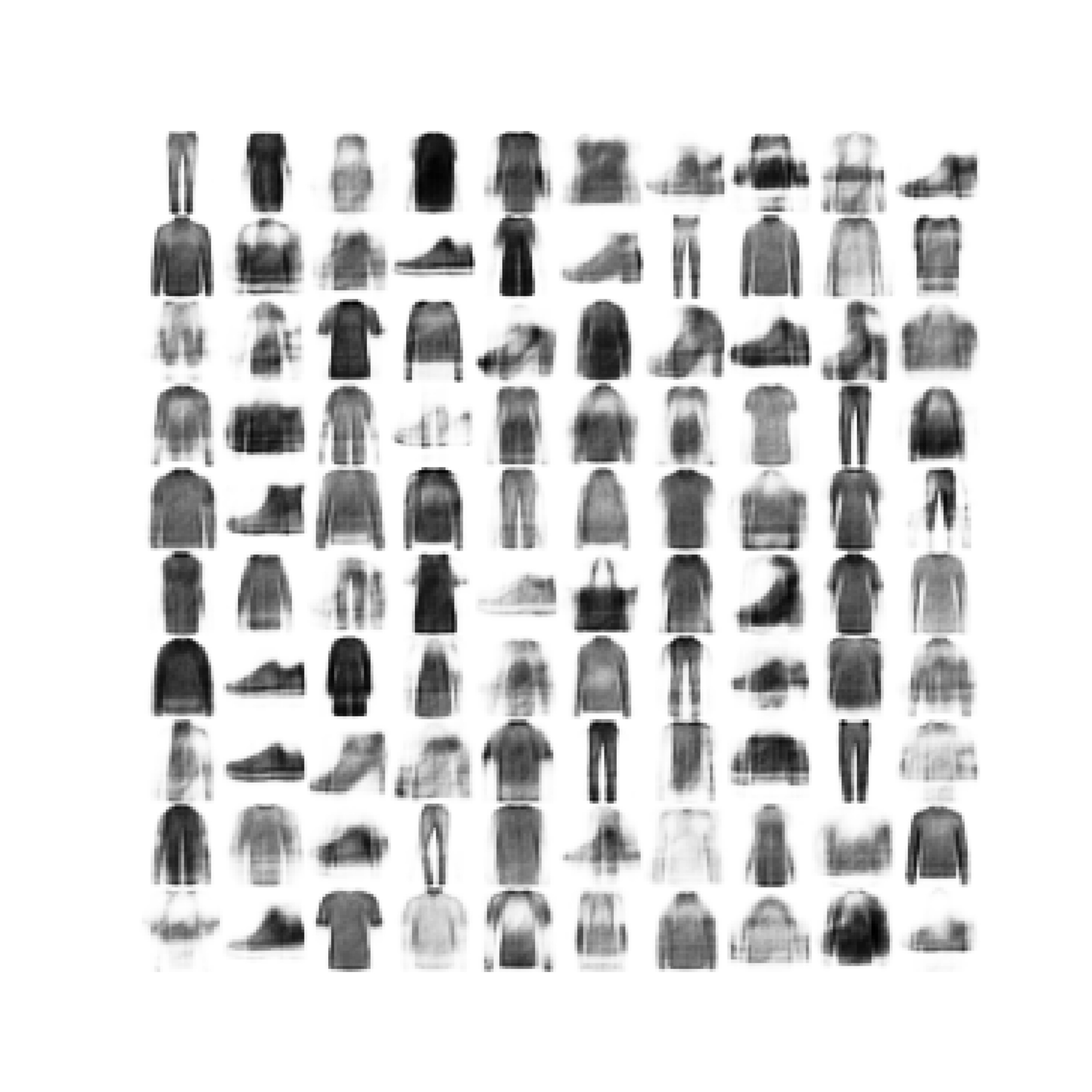}
    \caption{Generated images from random points sampled from prior. \textbf{Left:} AE; \textbf{Middle:} DGP, VAE; \textbf{Right:} IGP, VAE.}
    \label{fig:gen}
\end{figure*}

\subsection{Image Modality}
To examine the effectiveness of IGP, we run experiments on Fashion-MNIST \citep{fashion-mnist}.\footnote{See  \textit{Supplementary Material} for experiments on MNIST digits \citep{lecun2010mnist}.} Since KL-collapse is less of a concern in image domain (due to the absence of powerful auto-regressive decoding), we consider vanilla VAE with DGP and IGP posteriors, AE and IWAE \citep{burda2015} as our baselines. We follow the encoder-decoder architecture as in \citep{burda2015}, and set the latent dimension to 32. We train VAEs for 50 epochs with 128 batch size and Adam \citep{kingma2014} optimizer with learning rate 0.001, and trained with 5 different random starts. We select the model with smallest ELBO for VAE and IWAE, and lowest reconstruction loss for AE and report losses in Table~\ref{tab:image_loss}. Using IGP usually leads to a higher KL value and ELBO, but its effectiveness on activating all dimensions of the latent space is well-preserved on image domain.

\paragraph{Classification Experiment.} We use the latent code of images to do classification similar to text classification setting. Results are included in Table~\ref{tab:image_loss}. IGP VAEs still outperform DGP VAEs and AE, and achieves the best classification accuracy, whereas DGP VAEs cannot have a comparable accuracy with AE. This observation is consistent with our earlier observation for text.

\paragraph{Generative Experiment.} We use samples from prior to generate images in Figure~\ref{fig:gen}. As expected, AE cannot generate a valid image with clear boundary of object shape from random points, but both DGP and IGP models can generate images of various objects with different shapes. Intuitively, some images generated by DGP model have more emphasis on darker pixels and hence have a sharper boundary. In contrast, IGP model generates images with a lighter choice of pixels. However, there is no distinguishable difference between images generated by DGP and IGP VAE in terms of recognising objects.
\section{Analysis}\label{sec:analysis}
In this section we provide various theoretical and empirical analyses from different perspectives.

\subsection{Is This a Capacity Utilization Problem?}\label{sec:meanrep}
The encoder-decoder of VAEs can be seen as the sender-receiver in a communication network, where an input message $x$, is encoded into a compressed code $z$, and the receiver aims to decode $z$ back into $x$~\cite{alemi2018fixing}. The capacity of encoder in such a network can be measured in terms of the mutual information between $x$, $z$ bounded by:
$$H-D\leq I(x,z)\leq R,$$
where $R$ is ${\rm KL}(q_\phi(\mathbf{z}|\mathbf{x})||p(\mathbf{z}))$, and $D$ is $-\mathbb{E}_{q_\phi(\mathbf{z}|\mathbf{x})}[\log(p_\theta(\mathbf{x}|\mathbf{z}))]$, and H is the empirical data entropy (constant). In this framework, increasing the KL term can control the maximum capacity of the encoder channel. Tying variances of a diagonal Gaussian distribution can yield a higher KL-divergence requirement as well as learn a more representative mean vector for the latent space, which could be simply proved by the following theorem.

\begin{theorem}\label{the:rep}
Assume that diagonal Gaussian Distribution $p(\mathbf{x})=\mathcal{N}(\boldsymbol{\mu};diag(\sigma_1^2,\sigma_2^2,\dots,\sigma_n^2)$, is a probability density function for $n$ random variables $X_1,X_2,\dots,X_n$, and $q(\mathbf{y})=\mathcal{N}(\boldsymbol{\mu};\sigma^2\boldsymbol{I}_n)$, where $\sigma=\min(\sigma_1,\sigma_2,\cdots,\sigma_n)$, is another probability density function for $n$ random variables $Y_1,Y_2,\dots,Y_n$. Then, we have:
\begin{equation*}
    {\rm{KL}}(p(\mathbf{x})||\mathcal{N}(\boldsymbol{0};\boldsymbol{I}))\leq{\rm{KL}} (q(\mathbf{y})||\mathcal{N}(\boldsymbol{0};\boldsymbol{I})),
\end{equation*}
and for any $\varepsilon>0$,
\begin{equation*}
    Pr\!\left(\bigcap_{i=1}^{n}|X_i-\mu_i|<\varepsilon\!\right)\leq Pr\!\left(\bigcap_{i=1}^{n}|Y_i-\mu^{'}_i|<\varepsilon\!\right)
\end{equation*}
\end{theorem}
\begin{proof}
See \emph{Supplementary Material}.
\end{proof}

Hence, the direct use of isotropic Gaussian posterior can lead to a more optimal utilization of the latent space.
\begin{table}[t]
    \centering
    \begin{tabular}{l c c c c c }
        \toprule
        Config.& Rec. & KL & AU & BLEU-2/4\\
        \toprule
        $C=5$, DGP & 24.71 & 5.08 & 4 & 19.97/7.49\\
        $C=5$, IGP & 24.39 & 5.12 & 8 & 26.89/12.04\\
        \hline
        Base, IGP & 28.18 & 5.04 & 8 & 18.29/5.47\\
        Base, $C=5$, DGP & 24.53 & 5.11 & 6 & 24.48/10.60\\
        \bottomrule
    \end{tabular}
    \caption{Comparison of models trained fully under IGP or DGP vs hybrid models trained first via IGP, then initialised using this Base model and switched to DGP.}
    \label{tab:ablation}
\end{table}

\subsection{Is This an Optimization Problem?}
In the text domain, training VAEs faces the problem of posterior collapse. Popular strategies of avoiding posterior collapse in text such as $\beta$-VAE or the usage of C usually make the ELBO looser. It has been argued for that ELBO tightness might hurt the performance~\citep{pmlr-v80-rainforth18b}. Our empirical observation verifies the similar findings as tying the variances of a diagonal Gaussian posterior relaxes the tightness on ELBO and still leads to a better global optimum.

To highlight the optimisation advantage of IGP over DGP, we make an empirical observation on the SNLI dataset using batch size of 100 to first train a base VAE with IGP for 5 epochs with $C$ gradually increased to 5 (we refer to this as base model). Then using its weights, we untie variances to initialize a DGP model and further train it with $C=5$ from the base initialization. The results are provided in Table~\ref{tab:ablation}. 

We observe that using IGP as an initializer for vanilla VAE indeed improves performance in terms of reconstruction but does not make the DGP VAE to have the same AU under the same level of KL-divergence (compare rows $C=5$, IGP with Base, $C=5$, DGP). This indicates that the change of posterior might make the optimization of VAEs easier.

\subsection{Posterior Shape}
To understand the impact of isotropy on the aggregated posterior, $q_\phi(z)=\sum_{x\sim q(x)} q_\phi(z|x)$, we obtain unbiased samples of ${z}$ by sampling an ${x}$ from data and then ${z} \sim q_\phi({z}|{x})$, and measure the log determinant of covariance of the samples ($\log \det(\mathrm{Cov}[q_\phi({z})])$) as well as the mean of the samples to measure $||\mu||^2_2$. Table~\ref{tab:logdetcov} reports these for $C=15$. We observe that $\log \det(\mathrm{Cov}[q_\phi({z})])$ is significantly lower for IGP across indicating a sharper approximate posteriors. Additionally, the roughly matching $||\mu||^2_2$, in the presence of higher AU for IGP (see Figure~\ref{fig:loss}) suggests while both models were forced to transmit the same amount of information via a fixed channel capacity $C$, the IGP model has a more even distribution of information across different axes of the representation space, whereas DGP utilised only a small subspace. This, as observed earlier, translates into better discrimination in classification tasks.

\begin{table}[t]
    \centering
    \begin{tabular}{l c c c}
        \toprule
         & CBT & DBpedia & Yahoo\\
        \toprule
        DGP & $[0.11,-0.41]$ & $[0.11,-0.63]$ & $[0.10,-0.51]$\\
        IGP & $[0.11,-1.48]$ & $[0.12,-6.22]$ & $[0.08,-4.86]$\\
        \bottomrule
    \end{tabular}
    \caption{Reports $[\ ||\mu||^2_2\ ,\ \log \det(\mathrm{Cov}[q_\phi({z})])\ ]$.}
    \label{tab:logdetcov}
\end{table}
\section{Conclusion}\label{sec:conclusion}
In this work, we proposed a neglected variant of VAEs based on Isotropic Gaussian Posteriors (IGP) which addresses some of the sub-optimal properties of vanilla VAEs in representation space utilisation. We compare IGP VAEs with DGP VAEs, AEs, and IWAE on a series of quantitative and qualitative experiments including classification and generation. 

Our proposed modification consistently improved upon several established criteria from reconstruction quality to down stream task performance on various datasets. On the text classification tasks, IGP VAEs could outperform AEs, whereas DGP VAEs struggle to match AEs on performance. On the text generation tasks, IGP VAEs generally have better generation quality than DGP VAEs. These successes translate also into the image domain, specifically for downstream classification problem. Our empirical findings suggest IGP VAE is a good universal choice for both classification and generation tasks, as well as both text and image modalities.

Our ongoing work suggests the representation utilisation achieved by IGP has the potential to be exploited towards desired representational properties such as disentanglement. We are investigating such potentials as our future direction.
\bibliography{main.bib}
\clearpage

\appendix
\section{Proof of Theorem 1}
\begin{proof}
The first thing can be simply proved by the monotonic decreasing property of KL divergence in the Gaussian case. For the second thing, according to assumptions, we have:
\begin{small}
\begin{multline*}
    Pr\left(\bigcap_{i=1}^{n}|X_i-\mu_i|<\varepsilon\right)=
    \prod_{i=1}^{n}\int_{-\varepsilon}^{\varepsilon}\frac{1}{\sqrt{2\pi}\sigma_i}e^{-\frac{x_i^2}{2\sigma_i^2}}\,d{x_i},
\end{multline*}
\end{small}
and
\begin{small}
\begin{multline*}
    Pr\left(\bigcap_{i=1}^{n}|Y_i-\mu^{'}_i|<\varepsilon\right)=
    \prod_{i=1}^{n}\int_{-\varepsilon}^{\varepsilon}\frac{1}{\sqrt{2\pi}\sigma}e^{-\frac{y_i^2}{2\sigma^2}}\,d{y_i}\\=
    \left[\int_{-\varepsilon}^{\varepsilon}\frac{1}{\sqrt{2\pi}\sigma}e^{-\frac{y^2}{2\sigma^2}}\,d{y}\right]^n.
\end{multline*}
\end{small}
Construct auxiliary function:
\begin{small}
\begin{equation*}
    \begin{split}
    f(\sigma)&=\int_{-\varepsilon}^{\varepsilon}\frac{1}{\sqrt{2\pi}\sigma}e^{-\frac{x^2}{2\sigma^2}}\,dx=2\int_{0}^{\frac{\varepsilon}{\sqrt{2}\sigma}}\frac{1}{\sqrt{\pi}}e^{-x^2}\,dx.
    \end{split}
\end{equation*}
\end{small}
We have $f(\sigma)>0$. Calculating the first derivative of $f(\sigma)$, we have:
\begin{small}
\begin{equation*}
    f{'}(\sigma)=-\frac{2\varepsilon}{\sqrt{2\pi}\sigma^2}e^{-\frac{\varepsilon^2}{2\sigma^2}}<0.
\end{equation*}
\end{small}
Therefore, $f(\sigma)$ is a strictly decreasing function and $f(\sigma_i)\leq f(\sigma)$. We have:
\begin{small}
\begin{multline*}
    Pr\left(\bigcap_{i=1}^{n}|X_i-\mu_i|<\varepsilon\right)=\prod_{i=1}^{n}f(\sigma_i)\\\leq\\
    [f(\sigma)]^n=Pr\left(\bigcap_{i=1}^{n}|Y_i-\mu^{'}_i|<\varepsilon\right)
\end{multline*}
\end{small}
\end{proof}

\begin{table*}[t]
\centering
    \begin{tabular}{ll c c c c c}
        \toprule
        && Rec. & KL & AU & BLEU-2/4 & ROUGE-2/4\\
        \toprule
        \textbf{
        \parbox[t]{2mm}{\multirow{7}{*}{\rotatebox[origin=c]{90}{CBT}}}}
        & AE & $16.94_{\pm0.13}$ & - & $32.0_{\pm0.0}$ & $39.71_{\pm0.14}/30.81_{\pm0.12}$ & $43.51_{\pm0.11}/37.01_{\pm0.09}$\\\cdashlinelr{2-7}
        & $C=5$, DGP & $60.04_{\pm0.04}$ & $5.14_{\pm0.03}$ & $4.3_{\pm0.5}$ & $10.92_{\pm0.24}/2.17_{\pm0.07}$ & $5.04_{\pm0.24}/0.32_{\pm0.02}$\\
        & $C=5$, IGP & $60.50_{\pm0.02}$ & $5.09_{\pm0.03}$ & $32.0_{\pm0.0}$ & $15.45_{\pm0.18}/4.00_{\pm0.09}$ & $8.74_{\pm0.10}/0.78_{\pm0.03}$\\\cdashlinelr{2-7}
        & $C=15$, DGP & $52.11_{\pm0.11}$ & $15.11_{\pm0.04}$ & $11.0_{\pm1.6}$ & $14.76_{\pm0.04}/4.81_{\pm0.01}$ & $8.79_{\pm0.10}/2.13_{\pm0.02}$\\
        & $C=15$, IGP & $52.38_{\pm0.07}$ & $15.24_{\pm0.05}$ & $32.0_{\pm0.0}$ & $26.84_{\pm0.30}/11.73_{\pm0.15}$ & $20.54_{\pm0.21}/6.39_{\pm0.10}$\\\cdashlinelr{2-7}
        & $C=50$, DGP & $32.50_{\pm0.20}$ & $50.11_{\pm0.10}$ & $31.0_{\pm0.0}$ & $26.94_{\pm0.34}/16.53_{\pm0.20}$ & $25.39_{\pm0.39}/16.44_{\pm0.32}$\\
        & $C=50$, IGP & $31.97_{\pm0.08}$ & $50.31_{\pm0.07}$ & $32.0_{\pm0.0}$ & $33.27_{\pm0.30}/22.27_{\pm0.27}$ & $33.38_{\pm0.30}/23.65_{\pm0.22}$\\
        \hline
        \textbf{
          \parbox[t]{2mm}{\multirow{7}{*}{\rotatebox[origin=c]{90}{DBpedia}}} }
        & AE & $66.32_{\pm0.11}$ & - & $32.0_{\pm0.0}$ & $40.96_{\pm0.25}/27.57_{\pm0.17}$ & $35.87_{\pm0.19}/23.78_{\pm0.07}$\\\cdashlinelr{2-7}
        & $C=5$, DGP & $100.65_{\pm0.08}$ & $5.09_{\pm0.01}$ & $4.0_{\pm0.0}$ & $23.47_{\pm0.79}/14.00_{\pm0.47}$ & $20.85_{\pm0.69}/10.19_{\pm0.34}$\\
        & $C=5$, DGP & $101.73_{\pm0.31}$ & $5.04_{\pm0.01}$ & $32.0_{\pm0.0}$ & $25.09_{\pm0.55}/14.83_{\pm0.22}$ & $22.29_{\pm0.17}/10.47_{\pm0.10}$\\\cdashlinelr{2-7}
        & $C=15$, DGP & $94.16_{\pm0.19}$ & $15.06_{\pm0.04}$ & $8.7_{\pm0.9}$ & $35.35_{\pm0.49}/22.37_{\pm0.31}$ & $30.54_{\pm0.43}/17.41_{\pm0.16}$\\
        & $C=15$, IGP & $95.52_{\pm0.08}$ & $15.08_{\pm0.05}$ & $32.0_{\pm0.0}$ & $37.23_{\pm0.27}/24.47_{\pm0.11}$ & $34.19_{\pm0.12}/19.32_{\pm0.06}$\\\cdashlinelr{2-7}
        & $C=50$, DGP & $80.65_{\pm0.53}$ & $50.02_{\pm0.04}$ & $31.7_{\pm0.5}$ & $40.54_{\pm0.21}/26.95_{\pm0.19}$ & $35.19_{\pm0.25}/22.13_{\pm0.24}$\\
        & $C=50$, IGP & $80.58_{\pm0.04}$ & $50.15_{\pm0.04}$ & $32.0_{\pm0.0}$ & $44.79_{\pm0.30}/30.72_{\pm0.17}$ & $39.91_{\pm0.12}/25.40_{\pm0.08}$\\
        \hline
        \textbf{
        \parbox[t]{2mm}{\multirow{7}{*}{\rotatebox[origin=c]{90}{Yahoo Question}}}}
        & AE & $17.64_{\pm0.28}$ & - & $32.0_{\pm0.0}$ & $42.88_{\pm0.51}/32.86_{\pm0.58}$ & $41.63_{\pm0.58}/31.67_{\pm0.67}$\\\cdashlinelr{2-7}
        & $C=5$, DGP & $50.58_{\pm0.06}$ & $5.14_{\pm0.01}$ & $5.7_{\pm0.5}$ & $17.07_{\pm0.71}/6.04_{\pm0.25}$ & $10.96_{\pm0.41}/1.50_{\pm0.06}$\\
        & $C=5$, IGP & $51.24_{\pm0.01}$ & $5.06_{\pm0.03}$ & $32.0_{\pm0.0}$ & $20.91_{\pm0.03}/8.07_{\pm0.03}$ & $14.48_{\pm0.13}/2.21_{\pm0.02}$\\\cdashlinelr{2-7}
        & $C=15$, DGP & $43.00_{\pm0.12}$ & $15.06_{\pm0.04}$ & $9.3_{\pm1.2}$ & $22.62_{\pm0.37}/10.81_{\pm0.21}$ & $16.04_{\pm0.32}/4.76_{\pm0.08}$\\
        & $C=15$, IGP & $44.43_{\pm0.05}$ & $15.20_{\pm0.12}$ & $32.0_{\pm0.0}$ & $29.76_{\pm0.06}/14.99_{\pm0.08}$ & $23.11_{\pm0.17}/6.94_{\pm0.12}$\\\cdashlinelr{2-7}
        & $C=50$, DGP & $28.29_{\pm0.40}$ & $50.00_{\pm0.19}$ & $31.3_{\pm0.9}$ & $31.78_{\pm0.73}/20.47_{\pm0.70}$ & $27.14_{\pm0.85}/15.07_{\pm0.77}$\\
        & $C=50$, IGP & $26.18_{\pm0.19}$ & $50.15_{\pm0.08}$ & $32.0_{\pm0.0}$ & $39.68_{\pm0.20}/27.49_{\pm0.31}$ & $35.73_{\pm0.40}/22.57_{\pm0.55}$\\
        \hline
        \textbf{
        \parbox[t]{2mm}{\multirow{6}{*}{\rotatebox[origin=c]{90}{SNLI}}}}
        & $C=5$, DGP & $24.70_{\pm0.08}$ & $5.10_{\pm0.02}$ & $4.3_{\pm0.5}$ & $21.84_{\pm1.57}/8.74_{\pm0.90}$ & $15.89_{\pm1.47}/2.52_{\pm0.37}$\\
        & $C=5$, IGP & $24.54_{\pm0.18}$ & $5.12_{\pm0.01}$ & $8.0_{\pm0.0}$ & $26.59_{\pm0.82}/12.00_{\pm0.97}$ & $20.57_{\pm0.86}/4.41_{\pm0.89}$\\\cdashlinelr{2-7}
        & $C=15$, DGP & $19.25_{\pm0.42}$ & $15.04_{\pm0.04}$ & $7.0_{\pm0.0}$ & $31.34_{\pm0.24}/18.99_{\pm0.52}$ & $28.07_{\pm0.39}/12.79_{\pm0.77}$ \\
        & $C=15$, IGP & $18.58_{\pm0.18}$ & $15.09_{\pm0.01}$ & $8.0_{\pm0.0}$ & $35.56_{\pm0.45}/22.94_{\pm0.49}$ & $32.75_{\pm0.63}/16.88_{\pm0.60}$\\\cdashlinelr{2-7}
        & $C=50$, DGP & $20.35_{\pm0.63}$ & $49.94_{\pm0.04}$ & $8.0_{\pm0.0}$ & $26.16_{\pm1.11}/14.15_{\pm1.08}$ & $21.78_{\pm1.25}/8.14_{\pm1.06}$\\
        & $C=50$, IGP & $18.43_{\pm0.71}$ & $50.00_{\pm0.07}$ & $8.0_{\pm0.0}$ & $30.78_{\pm1.60}/18.81_{\pm1.69}$ & $27.50_{\pm1.77}/12.95_{\pm1.72}$\\
        \bottomrule
    \end{tabular}
    \caption{Results are calculated on the test set. We report mean value and standard deviation across 3 runs. Rec, AU, and PPL denote reconstruction loss, number of Active Units and estimated perplexity, respectively. DGP, and IGP denote diagonal Gaussian posteriors and isotropic Gaussian posteriors, respectively. $C$ is the target KL value.}
    \label{tab:losss}
\end{table*}


\begin{table*}[t]
    \centering
    \begin{tabular}{ll c c c c c}
        \toprule
        && Rec. & KL & AU & BLEU-2/4 & ROUGE-2/4\\
        \toprule
        \textbf{
         \parbox[t]{2mm}{\multirow{6}{*}{\rotatebox[origin=c]{90}{CBT}}}}
        & IG/DGP & $60.04_{\pm0.04}$ & $5.14_{\pm0.03}$ & $4.3_{\pm0.5}$ & $10.92_{\pm0.24}/2.17_{\pm0.07}$ & $5.04_{\pm0.24}/0.32_{\pm0.02}$\\
        & IG/IGP & $60.50_{\pm0.02}$ & $5.09_{\pm0.03}$ & $32.0_{\pm0.0}$ & $15.45_{\pm0.18}/4.00_{\pm0.09}$ & $8.74_{\pm0.10}/0.78_{\pm0.03}$\\\cdashlinelr{2-7}
        & MoG/DGP & $61.42_{\pm0.40}$ & $4.99_{\pm0.11}$ & $1.0_{\pm0.0}$ & $6.58_{\pm0.29}/1.04_{\pm0.06}$ & $2.35_{\pm0.02}/0.10_{\pm0.01}$\\
        & MoG/IGP & $60.57_{\pm0.45}$ & $6.03_{\pm0.06}$ & $20.3_{\pm13.0}$ & $8.79_{\pm0.36}/1.53_{\pm0.13}$ & $3.48_{\pm0.36}/0.21_{\pm0.04}$\\\cdashlinelr{2-7}
        & MoIG/DGP & $61.42_{\pm0.20}$ & $4.98_{\pm0.05}$ & $1.0_{\pm0.0}$ & $6.73_{\pm0.05}/1.00_{\pm0.03}$ & $2.51_{\pm0.03}/0.09_{\pm0.02}$\\
        & MoIG/IGP & $60.46_{\pm0.14}$ & $6.09_{\pm0.10}$ & $32.0_{\pm0.0}$ & $13.32_{\pm0.19}/2.91_{\pm0.05}$ & $6.84_{\pm0.17}/0.48_{\pm0.01}$\\
        \hline
        \textbf{
        \parbox[t]{2mm}{\multirow{6}{*}{\rotatebox[origin=c]{90}{SNLI}}}}
        & IG/DGP & $24.70_{\pm0.08}$ & $5.10_{\pm0.02}$ & $4.3_{\pm0.5}$ & $21.84_{\pm1.57}/8.74_{\pm0.90}$ & $15.89_{\pm1.47}/2.52_{\pm0.37}$\\
        & IG/IGP & $24.54_{\pm0.18}$ & $5.12_{\pm0.01}$ & $8.0_{\pm0.0}$ & $26.59_{\pm0.82}/12.00_{\pm0.97}$ & $20.57_{\pm0.86}/4.41_{\pm0.89}$\\\cdashlinelr{2-7}
        & MoG/DGP & $25.76_{\pm0.04}$ & $4.84_{\pm0.01}$ & $1.0_{\pm0.0}$ & $13.45_{\pm0.10}/4.60_{\pm0.09}$ & $8.14_{\pm0.38}/1.06_{\pm0.07}$\\
        & MoG/IGP & $28.40_{\pm0.23}$ & $4.78_{\pm0.04}$ & $1.0_{\pm0.0}$ & $9.89_{\pm1.36}/2.32_{\pm0.52}$ & $5.20_{\pm0.83}/0.23_{\pm0.08}$\\\cdashlinelr{2-7}
        & MoIG/DGP & $25.81_{\pm0.02}$ & $4.93_{\pm0.03}$ & $1.3_{\pm0.5}$ & $13.45_{\pm0.09}/4.40_{\pm0.10}$ & $7.52_{\pm0.07}/0.88_{\pm0.03}$\\
        & MoIG/IGP & $23.51_{\pm0.04}$ & $6.35_{\pm0.01}$ & $8.0_{\pm0.0}$ & $27.42_{\pm0.27}/12.64_{\pm0.22}$ & $20.83_{\pm0.48}/4.98_{\pm0.25}$\\
        \bottomrule
    \end{tabular}
    \caption{Results are calculated on the test set. We report mean value and standard deviation across 3 runs. IG, MoG, and MoIG denote isotropic Gaussian prior, Mixture of Gaussian prior, and Mixture of isotropic Gaussian prior, respectively.}
    \label{tab:prior}
\end{table*}
\section{Full Results}\label{app:alll}
We report detailed reconstruction loss, KL-divergence, active units and results of BLEU and ROUGE scores on reconstructed test set in Table~\ref{tab:losss}.
\section{Results on Mixture Priors}
In addition to standard Isotropic Gaussian prior, we also test how well both posteriors work with Mixture of Gaussian (MoG) prior \citep{pelsmaeker-aziz-2020-effective}. 
Since the regular MoG prior incorporates several diagonal Gaussian distributions, we also employ Mixture of Isotropic Gaussian (MoIG) to see the influence of introducing isotropy. We set the number of components of priors to 5 and train VAEs on CBT. Table~\ref{tab:prior} reports reconstruction loss, BLEU-2 and active units. 

As observed using either MoG or MoIG as prior degrades VAEs performance for both posteriors. The combination of MoIG prior and Isotropic posterior has a clear advantage of MoG due to consistency between prior and posterior distributional families. In all cases IGP outperforms its vanilla VAE counterpart (DGP), indicating  the better compatibility of IGP with other priors.

\begin{table}[t]
    \centering
    \begin{tabular}{ll c c}
        \toprule
         && \textbf{F1} & \textbf{Accuracy}\\
        \hline
        \textbf{
          \parbox[t]{3mm}{\multirow{8}{*}{\rotatebox[origin=c]{90}{DBpedia}}}}
        & AE & $90.08_{\pm0.66}$ & $90.63_{\pm0.67}$\\\cdashlinelr{2-4}
        & $C=5$, DGP & $52.76_{\pm3.76}$ & $81.88_{\pm3.11}$\\
        & $C=5$, IGP & $45.72_{\pm0.50}$ & $80.62_{\pm0.97}$\\\cdashlinelr{2-4}
        & $C=15$, DGP & $78.91_{\pm1.68}$ & $87.10_{\pm0.57}$\\
        & $C=15$, IGP & $82.62_{\pm0.73}$ & $91.11_{\pm0.41}$\\\cdashlinelr{2-4}
        & $C=50$, DGP & $87.68_{\pm0.36}$ & $88.74_{\pm0.23}$\\
        & $C=50$, IGP & $91.69_{\pm0.36}$ & $91.00_{\pm0.45}$\\\cdashlinelr{2-4}
        & Original & 98.08 & -\\
        \hline
        \textbf{
        \parbox[t]{3mm}{\multirow{8}{*}{\rotatebox[origin=c]{90}{Yahoo Question}}}}
        & AE & $37.28_{\pm0.34}$ & $35.58_{\pm0.60}$\\\cdashlinelr{2-4}
        & $C=5$, DGP & $14.00_{\pm0.41}$ & $20.75_{\pm0.13}$\\
        & $C=5$, IGP & $16.55_{\pm0.27}$ & $25.38_{\pm0.73}$\\\cdashlinelr{2-4}
        & $C=15$, DGP & $20.66_{\pm0.55}$ & $28.39_{\pm0.32}$\\
        & $C=15$, IGP & $31.32_{\pm0.12}$ & $40.03_{\pm0.49}$\\\cdashlinelr{2-4}
        & $C=50$, DGP & $29.40_{\pm0.19}$ & $33.05_{\pm1.02}$\\
        & $C=50$, IGP & $37.34_{\pm0.21}$ & $36.92_{\pm0.13}$\\\cdashlinelr{2-4}
        & Original & 57.55 & -\\
        \bottomrule
    \end{tabular}
    \caption{Macro-averaged \textbf{F1}-score of agreement on reconstructed test set and classification accuracy of representations from VAE encoder.}
    \label{tab:aggreement}
\end{table}

\section{Full Agreement and Classification Results}
The results of agreement experiment based on reconstructed test set and classification task based on the representations from VAEs trained on classification datasets are shown in Table~\ref{tab:aggreement} for three runs.

\section{Results on MNIST}
We show the losses of models trained on MNIST, and their classification accuracy in Table~\ref{tab:image_losss}. We also provide generation examples of AE, DGP VAE and IGP VAE in Figure~\ref{fig:clff}.
\begin{table}[t]
\centering
    \begin{tabular}{l l *{4}{c}}
        \toprule\\
        & Model & Rec. & KL & AU & CLF\\
        \hline
        & AE & 58.86 & - & 32 & 90.82\%\\
        \hline
        \textbf{
        \parbox[t]{2.5mm}{\multirow{4}{*}{\rotatebox[origin=c]{90}{DGP}}}}
        & VAE & 78.64 & 21.98 & 15 & 91.52\%\\
        & IWAE ($k=1$) & 79.29 & 21.65 & 15 & 91.49\%\\
        & IWAE ($k=5$) & 81.62 & 20.42 & 18 & 91.74\%\\
        & IWAE ($k=50$) & 91.03 & 17.74 & 21 & 91.91\%\\
        \hline
        \textbf{
        \parbox[t]{2.5mm}{\multirow{4}{*}{\rotatebox[origin=c]{90}{IGP}}}} 
        & VAE & 78.93 & 29.09 & 32 & 88.82\%\\
        & IWAE ($k=1$) & 79.51 & 28.64 & 32 & 88.87\%\\
        & IWAE ($k=5$) & 86.61 & 24.42 & 32 & 89.81\%\\
        & IWAE ($k=50$) & 104.14 & 19.47 & 32 & 90.71\%\\
        \bottomrule
    \end{tabular}
    \caption{Comparison of IGP and DGP models on MNIST datasets.}
    \label{tab:image_losss}
\end{table}

\begin{figure*}[t]
    \centering
    \includegraphics[scale=0.15]{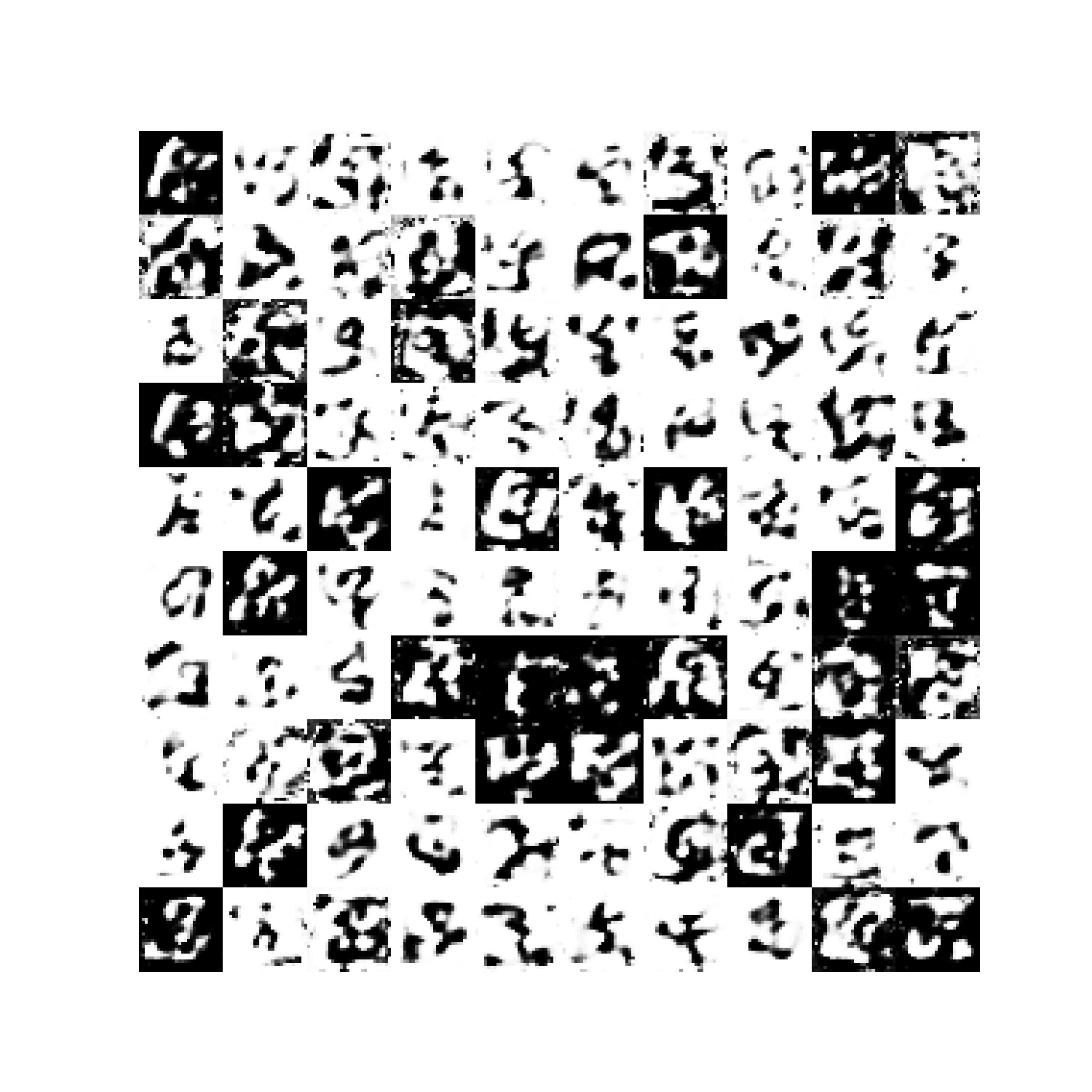}
    \includegraphics[scale=0.15]{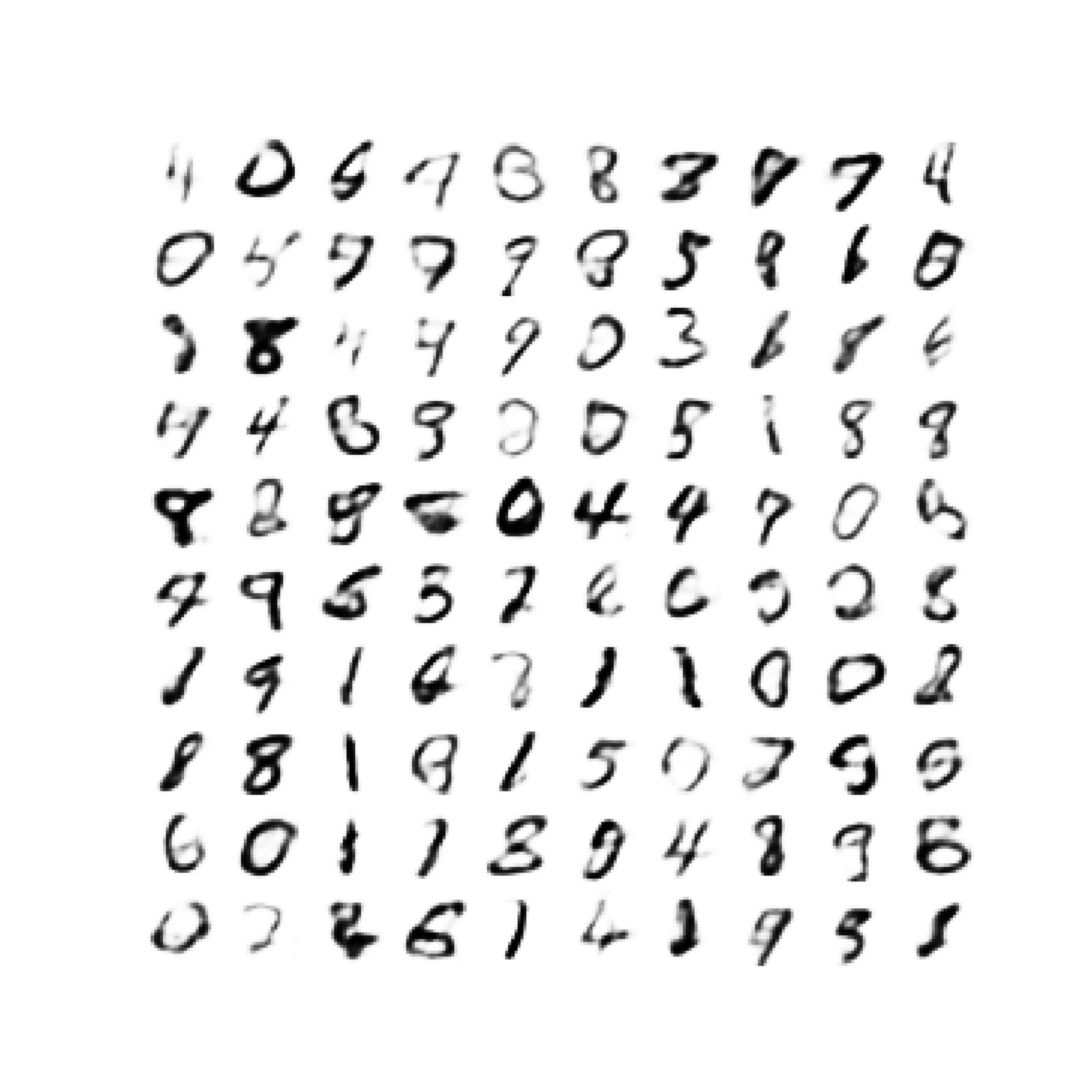}
    \includegraphics[scale=0.15]{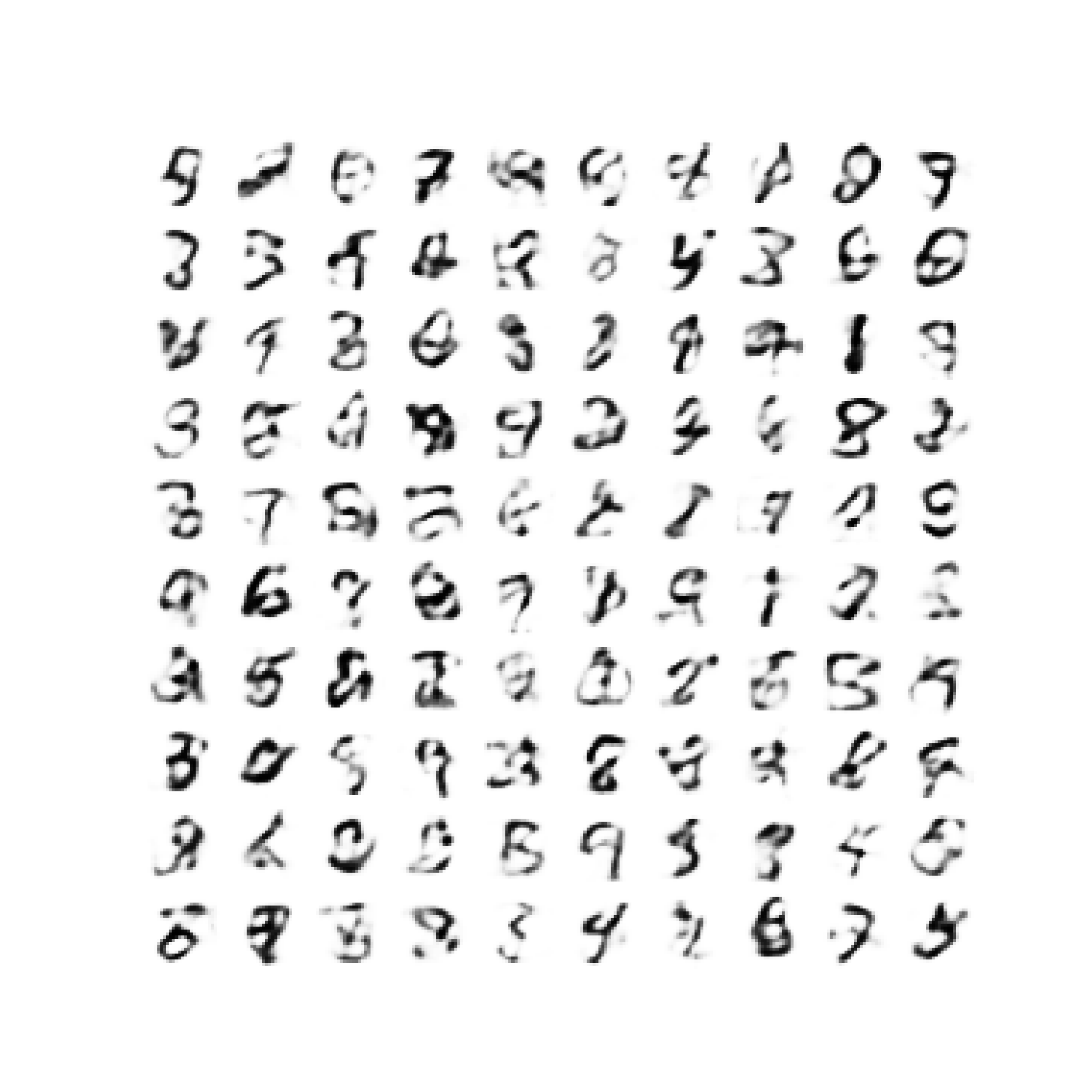}
    \caption{\textbf{Left:} AE; \textbf{Middle:} DGP, VAE; \textbf{Right:} IGP, VAE ($k=50$).}
    \label{fig:clff}
\end{figure*}
\end{document}


\maketitle
\section{Proof of Theorem \ref{the:rep}}\label{app:proof}
\begin{theorem}\label{the:rep}
Assume that diagonal Gaussian Distribution $p(\mathbf{x})=\mathcal{N}(\boldsymbol{\mu};diag(\sigma_1^2,\sigma_2^2,\dots,\sigma_n^2)$, is a probability density function for $n$ random variables $X_1,X_2,\dots,X_n$, and $q(\mathbf{y})=\mathcal{N}(\boldsymbol{\mu};\sigma^2\boldsymbol{I}_n)$, where $\sigma=\min(\sigma_1,\sigma_2,\cdots,\sigma_n)$, is another probability density function for $n$ random variables $Y_1,Y_2,\dots,Y_n$. Then, we have:
\begin{equation*}
\setlength{\abovedisplayskip}{3pt}
\setlength{\belowdisplayskip}{3pt}
\setlength{\abovedisplayshortskip}{3pt}
\setlength{\belowdisplayshortskip}{3pt}
    {\rm{KL}}(p(\mathbf{x})||\mathcal{N}(\boldsymbol{0};\boldsymbol{I}))\leq{\rm{KL}} (q(\mathbf{y})||\mathcal{N}(\boldsymbol{0};\boldsymbol{I})),
\end{equation*}
and for any $\varepsilon>0$,
\begin{equation*}
\setlength{\abovedisplayskip}{3pt}
\setlength{\belowdisplayskip}{3pt}
\setlength{\abovedisplayshortskip}{3pt}
\setlength{\belowdisplayshortskip}{3pt}
    Pr\!\left(\bigcap_{i=1}^{n}|X_i-\mu_i|<\varepsilon\!\right)\leq Pr\!\left(\bigcap_{i=1}^{n}|Y_i-\mu^{'}_i|<\varepsilon\!\right)
\end{equation*}
\end{theorem}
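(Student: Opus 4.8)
The plan is to exploit the fully factorised structure of both distributions, so that each of the two claims collapses to $n$ one–dimensional comparisons, and then to study the relevant scalar function in each case. A useful preliminary observation is that $p$ and $q$ share the same mean $\boldsymbol{\mu}$: every contribution that depends only on the mean is therefore identical on the two sides of each inequality and will cancel, leaving a comparison that involves the variances alone.

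For the KL inequality I would begin from the closed form for the divergence of a Gaussian from the standard normal. With $\Sigma_p=\mathrm{diag}(\sigma_1^2,\dots,\sigma_n^2)$ and $\Sigma_q=\sigma^2\boldsymbol{I}_n$ this reads
\[
\mathrm{KL}\bigl(p\,\|\,\mathcal{N}(\boldsymbol{0},\boldsymbol{I})\bigr)=\tfrac12\sum_{i=1}^{n}\bigl(\sigma_i^2+\mu_i^2-1-\log\sigma_i^2\bigr),
\]
with the analogous expression for $q$ obtained by replacing every $\sigma_i^2$ by $\sigma^2$. Subtracting, the mean terms cancel and the gap $\mathrm{KL}(q\,\|\,\cdot)-\mathrm{KL}(p\,\|\,\cdot)$ equals $\tfrac12\sum_i\bigl[F(\sigma_i^2)-F(\sigma^2)\bigr]$, where $F(t)=t-1-\log t$. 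It therefore suffices to show $F(\sigma_i^2)\le F(\sigma^2)$ for each $i$, and since $\sigma=\min_j\sigma_j$ we always have $\sigma^2\le\sigma_i^2$. The whole argument now rests on the shape of $F$: from $F'(t)=1-1/t$ the function decreases on $(0,1]$ and increases on $[1,\infty)$, with its minimum at $t=1$. I expect this to be the main obstacle, because monotonicity delivers $F(\sigma_i^2)\le F(\sigma^2)$ only when both arguments sit in the decreasing branch, i.e. when $\sigma^2\le\sigma_i^2\le 1$. I would therefore make explicit the natural VAE posterior assumption $\sigma_i\le 1$ for all $i$; under it every variance lies in $(0,1]$, the coordinate-wise bound goes through, and summing yields the claim. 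Pinning down exactly this variance range — and noting that a single $\sigma_i^2>1$ can push $F(\sigma_i^2)$ above $F(\sigma^2)$ and reverse the inequality — is the delicate part of the KL statement.

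The probability-concentration inequality is more robust and uses only $\sigma=\min_j\sigma_j$. By independence of the coordinates under both laws, the joint events factorise, giving
\[
\Pr\!\Bigl(\textstyle\bigcap_{i=1}^{n}|X_i-\mu_i|<\varepsilon\Bigr)=\prod_{i=1}^{n}\bigl(2\Phi(\varepsilon/\sigma_i)-1\bigr),
\]
and likewise the right-hand side equals $\prod_{i=1}^{n}\bigl(2\Phi(\varepsilon/\sigma)-1\bigr)$, where $\Phi$ denotes the standard normal CDF. Because $\sigma\le\sigma_i$ forces $\varepsilon/\sigma\ge\varepsilon/\sigma_i$ and $\Phi$ is increasing, each factor on the right dominates the matching factor on the left; as all factors lie in $(0,1)$, multiplying these coordinate-wise inequalities preserves their direction and yields $\Pr(\bigcap_i|X_i-\mu_i|<\varepsilon)\le\Pr(\bigcap_i|Y_i-\mu_i'|<\varepsilon)$. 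The only elementary point to verify is that $t\mapsto 2\Phi(t)-1$ is positive and strictly increasing on $(0,\infty)$, after which the factorwise comparison — and hence this half of the theorem — is immediate.
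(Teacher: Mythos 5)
Your proof of the concentration inequality is correct and is essentially the paper's own argument: the paper factorises both joint probabilities by coordinate independence and then shows the one-dimensional function $f(\sigma)=\int_{-\varepsilon}^{\varepsilon}\frac{1}{\sqrt{2\pi}\sigma}e^{-x^2/2\sigma^2}\,dx$ is strictly decreasing by computing $f'(\sigma)=-\frac{2\varepsilon}{\sqrt{2\pi}\sigma^2}e^{-\varepsilon^2/2\sigma^2}<0$; your $2\Phi(\varepsilon/\sigma)-1$ is exactly this $f$, and monotonicity of $\Phi$ simply replaces the explicit derivative, so the two routes are interchangeable. On the KL half, however, your treatment goes beyond the paper's, and your caution is warranted. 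The paper dismisses this part in one line (``the monotonic decreasing property of KL divergence in the Gaussian case''), but as you correctly identify, the per-coordinate contribution $F(t)=t-1-\log t$ is decreasing only on $(0,1]$ and increasing on $[1,\infty)$, so the monotonicity appeal is valid only under the unstated hypothesis $\sigma_i\le 1$ for all $i$. Without that hypothesis the first inequality is in fact false as stated: take $n=2$, $\boldsymbol{\mu}=\boldsymbol{0}$, $\sigma_1=1$, $\sigma_2=2$, so that $\sigma=1$; then $\mathrm{KL}(q\,\|\,\mathcal{N}(\boldsymbol{0},\boldsymbol{I}))=0$ while $\mathrm{KL}(p\,\|\,\mathcal{N}(\boldsymbol{0},\boldsymbol{I}))=\tfrac12\,(3-\log 4)>0$. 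So your proposed assumption $\sigma_i\le 1$ (plausible for KL-regularised VAE posteriors, but an assumption nonetheless) is not a cosmetic convenience but a genuinely necessary repair; your write-up proves the theorem under that restriction, whereas the paper's one-line justification silently presupposes it. In short: part two matches the paper's proof; part one is a corrected and complete version of an argument the paper leaves with a gap.
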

\begin{proof}
The first thing can be simply proved by the monotonic decreasing property of KL divergence in the Gaussian case. For the second thing, according to assumptions, we have:
\begin{small}
\begin{multline*}
    Pr\left(\bigcap_{i=1}^{n}|X_i-\mu_i|<\varepsilon\right)=
    \prod_{i=1}^{n}\int_{-\varepsilon}^{\varepsilon}\frac{1}{\sqrt{2\pi}\sigma_i}e^{-\frac{x_i^2}{2\sigma_i^2}}\,d{x_i},
\end{multline*}
\end{small}
and
\begin{small}
\begin{multline*}
    Pr\left(\bigcap_{i=1}^{n}|Y_i-\mu^{'}_i|<\varepsilon\right)=
    \prod_{i=1}^{n}\int_{-\varepsilon}^{\varepsilon}\frac{1}{\sqrt{2\pi}\sigma}e^{-\frac{y_i^2}{2\sigma^2}}\,d{y_i}\\=
    \left[\int_{-\varepsilon}^{\varepsilon}\frac{1}{\sqrt{2\pi}\sigma}e^{-\frac{y^2}{2\sigma^2}}\,d{y}\right]^n.
\end{multline*}
\end{small}
Construct auxiliary function:
\begin{small}
\begin{equation*}
    \begin{split}
    f(\sigma)&=\int_{-\varepsilon}^{\varepsilon}\frac{1}{\sqrt{2\pi}\sigma}e^{-\frac{x^2}{2\sigma^2}}\,dx=2\int_{0}^{\frac{\varepsilon}{\sqrt{2}\sigma}}\frac{1}{\sqrt{\pi}}e^{-x^2}\,dx.
    \end{split}
\end{equation*}
\end{small}
We have $f(\sigma)>0$. Calculating the first derivative of $f(\sigma)$, we have:
\begin{small}
\begin{equation*}
    f{'}(\sigma)=-\frac{2\varepsilon}{\sqrt{2\pi}\sigma^2}e^{-\frac{\varepsilon^2}{2\sigma^2}}<0.
\end{equation*}
\end{small}
Therefore, $f(\sigma)$ is a strictly decreasing function and $f(\sigma_i)\leq f(\sigma)$. We have:
\begin{small}
\begin{multline*}
    Pr\left(\bigcap_{i=1}^{n}|X_i-\mu_i|<\varepsilon\right)=\prod_{i=1}^{n}f(\sigma_i)\\\leq\\
    [f(\sigma)]^n=Pr\left(\bigcap_{i=1}^{n}|Y_i-\mu^{'}_i|<\varepsilon\right)
\end{multline*}
\end{small}
\end{proof}

\begin{table*}[t]
\centering
    \begin{tabular}{ll c c c c c}
        \toprule
        && Rec. & KL & AU & BLEU-2/4 & ROUGE-2/4\\
        \toprule
        \textbf{
        \parbox[t]{2mm}{\multirow{7}{*}{\rotatebox[origin=c]{90}{CBT}}}}
        & AE & $16.94_{\pm0.13}$ & - & $32.0_{\pm0.0}$ & $39.71_{\pm0.14}/30.81_{\pm0.12}$ & $43.51_{\pm0.11}/37.01_{\pm0.09}$\\\cdashlinelr{2-7}
        & $C=5$, DGP & $60.04_{\pm0.04}$ & $5.14_{\pm0.03}$ & $4.3_{\pm0.5}$ & $10.92_{\pm0.24}/2.17_{\pm0.07}$ & $5.04_{\pm0.24}/0.32_{\pm0.02}$\\
        & $C=5$, IGP & $60.50_{\pm0.02}$ & $5.09_{\pm0.03}$ & $32.0_{\pm0.0}$ & $15.45_{\pm0.18}/4.00_{\pm0.09}$ & $8.74_{\pm0.10}/0.78_{\pm0.03}$\\\cdashlinelr{2-7}
        & $C=15$, DGP & $52.11_{\pm0.11}$ & $15.11_{\pm0.04}$ & $11.0_{\pm1.6}$ & $14.76_{\pm0.04}/4.81_{\pm0.01}$ & $8.79_{\pm0.10}/2.13_{\pm0.02}$\\
        & $C=15$, IGP & $52.38_{\pm0.07}$ & $15.24_{\pm0.05}$ & $32.0_{\pm0.0}$ & $26.84_{\pm0.30}/11.73_{\pm0.15}$ & $20.54_{\pm0.21}/6.39_{\pm0.10}$\\\cdashlinelr{2-7}
        & $C=50$, DGP & $32.50_{\pm0.20}$ & $50.11_{\pm0.10}$ & $31.0_{\pm0.0}$ & $26.94_{\pm0.34}/16.53_{\pm0.20}$ & $25.39_{\pm0.39}/16.44_{\pm0.32}$\\
        & $C=50$, IGP & $31.97_{\pm0.08}$ & $50.31_{\pm0.07}$ & $32.0_{\pm0.0}$ & $33.27_{\pm0.30}/22.27_{\pm0.27}$ & $33.38_{\pm0.30}/23.65_{\pm0.22}$\\
        \hline
        \textbf{
          \parbox[t]{2mm}{\multirow{7}{*}{\rotatebox[origin=c]{90}{DBpedia}}} }
        & AE & $66.32_{\pm0.11}$ & - & $32.0_{\pm0.0}$ & $40.96_{\pm0.25}/27.57_{\pm0.17}$ & $35.87_{\pm0.19}/23.78_{\pm0.07}$\\\cdashlinelr{2-7}
        & $C=5$, DGP & $100.65_{\pm0.08}$ & $5.09_{\pm0.01}$ & $4.0_{\pm0.0}$ & $23.47_{\pm0.79}/14.00_{\pm0.47}$ & $20.85_{\pm0.69}/10.19_{\pm0.34}$\\
        & $C=5$, DGP & $101.73_{\pm0.31}$ & $5.04_{\pm0.01}$ & $32.0_{\pm0.0}$ & $25.09_{\pm0.55}/14.83_{\pm0.22}$ & $22.29_{\pm0.17}/10.47_{\pm0.10}$\\\cdashlinelr{2-7}
        & $C=15$, DGP & $94.16_{\pm0.19}$ & $15.06_{\pm0.04}$ & $8.7_{\pm0.9}$ & $35.35_{\pm0.49}/22.37_{\pm0.31}$ & $30.54_{\pm0.43}/17.41_{\pm0.16}$\\
        & $C=15$, IGP & $95.52_{\pm0.08}$ & $15.08_{\pm0.05}$ & $32.0_{\pm0.0}$ & $37.23_{\pm0.27}/24.47_{\pm0.11}$ & $34.19_{\pm0.12}/19.32_{\pm0.06}$\\\cdashlinelr{2-7}
        & $C=50$, DGP & $80.65_{\pm0.53}$ & $50.02_{\pm0.04}$ & $31.7_{\pm0.5}$ & $40.54_{\pm0.21}/26.95_{\pm0.19}$ & $35.19_{\pm0.25}/22.13_{\pm0.24}$\\
        & $C=50$, IGP & $80.58_{\pm0.04}$ & $50.15_{\pm0.04}$ & $32.0_{\pm0.0}$ & $44.79_{\pm0.30}/30.72_{\pm0.17}$ & $39.91_{\pm0.12}/25.40_{\pm0.08}$\\
        \hline
        \textbf{
        \parbox[t]{2mm}{\multirow{7}{*}{\rotatebox[origin=c]{90}{Yahoo Question}}}}
        & AE & $17.64_{\pm0.28}$ & - & $32.0_{\pm0.0}$ & $42.88_{\pm0.51}/32.86_{\pm0.58}$ & $41.63_{\pm0.58}/31.67_{\pm0.67}$\\\cdashlinelr{2-7}
        & $C=5$, DGP & $50.58_{\pm0.06}$ & $5.14_{\pm0.01}$ & $5.7_{\pm0.5}$ & $17.07_{\pm0.71}/6.04_{\pm0.25}$ & $10.96_{\pm0.41}/1.50_{\pm0.06}$\\
        & $C=5$, IGP & $51.24_{\pm0.01}$ & $5.06_{\pm0.03}$ & $32.0_{\pm0.0}$ & $20.91_{\pm0.03}/8.07_{\pm0.03}$ & $14.48_{\pm0.13}/2.21_{\pm0.02}$\\\cdashlinelr{2-7}
        & $C=15$, DGP & $43.00_{\pm0.12}$ & $15.06_{\pm0.04}$ & $9.3_{\pm1.2}$ & $22.62_{\pm0.37}/10.81_{\pm0.21}$ & $16.04_{\pm0.32}/4.76_{\pm0.08}$\\
        & $C=15$, IGP & $44.43_{\pm0.05}$ & $15.20_{\pm0.12}$ & $32.0_{\pm0.0}$ & $29.76_{\pm0.06}/14.99_{\pm0.08}$ & $23.11_{\pm0.17}/6.94_{\pm0.12}$\\\cdashlinelr{2-7}
        & $C=50$, DGP & $28.29_{\pm0.40}$ & $50.00_{\pm0.19}$ & $31.3_{\pm0.9}$ & $31.78_{\pm0.73}/20.47_{\pm0.70}$ & $27.14_{\pm0.85}/15.07_{\pm0.77}$\\
        & $C=50$, IGP & $26.18_{\pm0.19}$ & $50.15_{\pm0.08}$ & $32.0_{\pm0.0}$ & $39.68_{\pm0.20}/27.49_{\pm0.31}$ & $35.73_{\pm0.40}/22.57_{\pm0.55}$\\
        \hline
        \textbf{
        \parbox[t]{2mm}{\multirow{6}{*}{\rotatebox[origin=c]{90}{SNLI}}}}
        & $C=5$, DGP & $24.70_{\pm0.08}$ & $5.10_{\pm0.02}$ & $4.3_{\pm0.5}$ & $21.84_{\pm1.57}/8.74_{\pm0.90}$ & $15.89_{\pm1.47}/2.52_{\pm0.37}$\\
        & $C=5$, IGP & $24.54_{\pm0.18}$ & $5.12_{\pm0.01}$ & $8.0_{\pm0.0}$ & $26.59_{\pm0.82}/12.00_{\pm0.97}$ & $20.57_{\pm0.86}/4.41_{\pm0.89}$\\\cdashlinelr{2-7}
        & $C=15$, DGP & $19.25_{\pm0.42}$ & $15.04_{\pm0.04}$ & $7.0_{\pm0.0}$ & $31.34_{\pm0.24}/18.99_{\pm0.52}$ & $28.07_{\pm0.39}/12.79_{\pm0.77}$ \\
        & $C=15$, IGP & $18.58_{\pm0.18}$ & $15.09_{\pm0.01}$ & $8.0_{\pm0.0}$ & $35.56_{\pm0.45}/22.94_{\pm0.49}$ & $32.75_{\pm0.63}/16.88_{\pm0.60}$\\\cdashlinelr{2-7}
        & $C=50$, DGP & $20.35_{\pm0.63}$ & $49.94_{\pm0.04}$ & $8.0_{\pm0.0}$ & $26.16_{\pm1.11}/14.15_{\pm1.08}$ & $21.78_{\pm1.25}/8.14_{\pm1.06}$\\
        & $C=50$, IGP & $18.43_{\pm0.71}$ & $50.00_{\pm0.07}$ & $8.0_{\pm0.0}$ & $30.78_{\pm1.60}/18.81_{\pm1.69}$ & $27.50_{\pm1.77}/12.95_{\pm1.72}$\\
        \bottomrule
    \end{tabular}
    \caption{Results are calculated on the test set. We report mean value and standard deviation across 3 runs. Rec, AU, and PPL denote reconstruction loss, number of Active Units and estimated perplexity, respectively. DGP, and IGP denote diagonal Gaussian posteriors and isotropic Gaussian posteriors, respectively. $C$ is the target KL value.}
    \label{tab:loss}
\end{table*}


\begin{table*}[t]
    \centering
    \begin{tabular}{ll c c c c c}
        \toprule
        && Rec. & KL & AU & BLEU-2/4 & ROUGE-2/4\\
        \toprule
        \textbf{
         \parbox[t]{2mm}{\multirow{6}{*}{\rotatebox[origin=c]{90}{CBT}}}}
        & IG/DGP & $60.04_{\pm0.04}$ & $5.14_{\pm0.03}$ & $4.3_{\pm0.5}$ & $10.92_{\pm0.24}/2.17_{\pm0.07}$ & $5.04_{\pm0.24}/0.32_{\pm0.02}$\\
        & IG/IGP & $60.50_{\pm0.02}$ & $5.09_{\pm0.03}$ & $32.0_{\pm0.0}$ & $15.45_{\pm0.18}/4.00_{\pm0.09}$ & $8.74_{\pm0.10}/0.78_{\pm0.03}$\\\cdashlinelr{2-7}
        & MoG/DGP & $61.42_{\pm0.40}$ & $4.99_{\pm0.11}$ & $1.0_{\pm0.0}$ & $6.58_{\pm0.29}/1.04_{\pm0.06}$ & $2.35_{\pm0.02}/0.10_{\pm0.01}$\\
        & MoG/IGP & $60.57_{\pm0.45}$ & $6.03_{\pm0.06}$ & $20.3_{\pm13.0}$ & $8.79_{\pm0.36}/1.53_{\pm0.13}$ & $3.48_{\pm0.36}/0.21_{\pm0.04}$\\\cdashlinelr{2-7}
        & MoIG/DGP & $61.42_{\pm0.20}$ & $4.98_{\pm0.05}$ & $1.0_{\pm0.0}$ & $6.73_{\pm0.05}/1.00_{\pm0.03}$ & $2.51_{\pm0.03}/0.09_{\pm0.02}$\\
        & MoIG/IGP & $60.46_{\pm0.14}$ & $6.09_{\pm0.10}$ & $32.0_{\pm0.0}$ & $13.32_{\pm0.19}/2.91_{\pm0.05}$ & $6.84_{\pm0.17}/0.48_{\pm0.01}$\\
        \hline
        \textbf{
        \parbox[t]{2mm}{\multirow{6}{*}{\rotatebox[origin=c]{90}{SNLI}}}}
        & IG/DGP & $24.70_{\pm0.08}$ & $5.10_{\pm0.02}$ & $4.3_{\pm0.5}$ & $21.84_{\pm1.57}/8.74_{\pm0.90}$ & $15.89_{\pm1.47}/2.52_{\pm0.37}$\\
        & IG/IGP & $24.54_{\pm0.18}$ & $5.12_{\pm0.01}$ & $8.0_{\pm0.0}$ & $26.59_{\pm0.82}/12.00_{\pm0.97}$ & $20.57_{\pm0.86}/4.41_{\pm0.89}$\\\cdashlinelr{2-7}
        & MoG/DGP & $25.76_{\pm0.04}$ & $4.84_{\pm0.01}$ & $1.0_{\pm0.0}$ & $13.45_{\pm0.10}/4.60_{\pm0.09}$ & $8.14_{\pm0.38}/1.06_{\pm0.07}$\\
        & MoG/IGP & $28.40_{\pm0.23}$ & $4.78_{\pm0.04}$ & $1.0_{\pm0.0}$ & $9.89_{\pm1.36}/2.32_{\pm0.52}$ & $5.20_{\pm0.83}/0.23_{\pm0.08}$\\\cdashlinelr{2-7}
        & MoIG/DGP & $25.81_{\pm0.02}$ & $4.93_{\pm0.03}$ & $1.3_{\pm0.5}$ & $13.45_{\pm0.09}/4.40_{\pm0.10}$ & $7.52_{\pm0.07}/0.88_{\pm0.03}$\\
        & MoIG/IGP & $23.51_{\pm0.04}$ & $6.35_{\pm0.01}$ & $8.0_{\pm0.0}$ & $27.42_{\pm0.27}/12.64_{\pm0.22}$ & $20.83_{\pm0.48}/4.98_{\pm0.25}$\\
        \bottomrule
    \end{tabular}
    \caption{Results are calculated on the test set. We report mean value and standard deviation across 3 runs. IG, MoG, and MoIG denote isotropic Gaussian prior, Mixture of Gaussian prior, and Mixture of isotropic Gaussian prior, respectively.}
    \label{tab:prior}
\end{table*}
\section{Full Results}\label{app:all}
We report detailed reconstruction loss, KL-divergence, active units and results of BLEU and ROUGE scores on reconstructed test set in Table~\ref{tab:loss}.
\section{Results on Mixture Priors}
In addition to standard Isotropic Gaussian prior, we also test how well both posteriors work with Mixture of Gaussian (MoG) prior \citep{pelsmaeker-aziz-2020-effective}. 
Since the regular MoG prior incorporates several diagonal Gaussian distributions, we also employ Mixture of Isotropic Gaussian (MoIG) to see the influence of introducing isotropy. We set the number of components of priors to 5 and train VAEs on CBT. Table~\ref{tab:prior} reports reconstruction loss, BLEU-2 and active units. 

As observed using either MoG or MoIG as prior degrades VAEs performance for both posteriors. The combination of MoIG prior and Isotropic posterior has a clear advantage of MoG due to consistency between prior and posterior distributional families. In all cases IGP outperforms its vanilla VAE counterpart (DGP), indicating  the better compatibility of IGP with other priors.

\begin{table}[t]
    \centering
    \begin{tabular}{ll c c}
        \toprule
         && \textbf{F1} & \textbf{Accuracy}\\
        \hline
        \textbf{
          \parbox[t]{3mm}{\multirow{8}{*}{\rotatebox[origin=c]{90}{DBpedia}}}}
        & AE & $90.08_{\pm0.66}$ & $90.63_{\pm0.67}$\\\cdashlinelr{2-4}
        & $C=5$, DGP & $52.76_{\pm3.76}$ & $81.88_{\pm3.11}$\\
        & $C=5$, IGP & $45.72_{\pm0.50}$ & $80.62_{\pm0.97}$\\\cdashlinelr{2-4}
        & $C=15$, DGP & $78.91_{\pm1.68}$ & $87.10_{\pm0.57}$\\
        & $C=15$, IGP & $82.62_{\pm0.73}$ & $91.11_{\pm0.41}$\\\cdashlinelr{2-4}
        & $C=50$, DGP & $87.68_{\pm0.36}$ & $88.74_{\pm0.23}$\\
        & $C=50$, IGP & $91.69_{\pm0.36}$ & $91.00_{\pm0.45}$\\\cdashlinelr{2-4}
        & Original & 98.08 & -\\
        \hline
        \textbf{
        \parbox[t]{3mm}{\multirow{8}{*}{\rotatebox[origin=c]{90}{Yahoo Question}}}}
        & AE & $37.28_{\pm0.34}$ & $35.58_{\pm0.60}$\\\cdashlinelr{2-4}
        & $C=5$, DGP & $14.00_{\pm0.41}$ & $20.75_{\pm0.13}$\\
        & $C=5$, IGP & $16.55_{\pm0.27}$ & $25.38_{\pm0.73}$\\\cdashlinelr{2-4}
        & $C=15$, DGP & $20.66_{\pm0.55}$ & $28.39_{\pm0.32}$\\
        & $C=15$, IGP & $31.32_{\pm0.12}$ & $40.03_{\pm0.49}$\\\cdashlinelr{2-4}
        & $C=50$, DGP & $29.40_{\pm0.19}$ & $33.05_{\pm1.02}$\\
        & $C=50$, IGP & $37.34_{\pm0.21}$ & $36.92_{\pm0.13}$\\\cdashlinelr{2-4}
        & Original & 57.55 & -\\
        \bottomrule
    \end{tabular}
    \caption{Macro-averaged \textbf{F1}-score of agreement on reconstructed test set and classification accuracy of representations from VAE encoder.}
    \label{tab:aggreement}
\end{table}

\section{Full Agreement and Classification Results}
The results of agreement experiment based on reconstructed test set and classification task based on the representations from VAEs trained on classification datasets are shown in Table~\ref{tab:aggreement} for three runs.

\section{Results on MNIST}
We show the losses of models trained on MNIST, and their classification accuracy in Table~\ref{tab:image_loss}. We also provide generation examples of AE, DGP VAE and IGP VAE in Figure~\ref{fig:clf}.
\begin{table}[t]
\centering
    \begin{tabular}{l l *{4}{c}}
        \toprule\\
        & Model & Rec. & KL & AU & CLF\\
        \hline
        & AE & 58.86 & - & 32 & 90.82\%\\
        \hline
        \textbf{
        \parbox[t]{2.5mm}{\multirow{4}{*}{\rotatebox[origin=c]{90}{DGP}}}}
        & VAE & 78.64 & 21.98 & 15 & 91.52\%\\
        & IWAE ($k=1$) & 79.29 & 21.65 & 15 & 91.49\%\\
        & IWAE ($k=5$) & 81.62 & 20.42 & 18 & 91.74\%\\
        & IWAE ($k=50$) & 91.03 & 17.74 & 21 & 91.91\%\\
        \hline
        \textbf{
        \parbox[t]{2.5mm}{\multirow{4}{*}{\rotatebox[origin=c]{90}{IGP}}}} 
        & VAE & 78.93 & 29.09 & 32 & 88.82\%\\
        & IWAE ($k=1$) & 79.51 & 28.64 & 32 & 88.87\%\\
        & IWAE ($k=5$) & 86.61 & 24.42 & 32 & 89.81\%\\
        & IWAE ($k=50$) & 104.14 & 19.47 & 32 & 90.71\%\\
        \bottomrule
    \end{tabular}
    \caption{Comparison of IGP and DGP models on MNIST datasets.}
    \label{tab:image_loss}
\end{table}

\begin{figure*}[t]
    \centering
    \includegraphics[scale=0.15]{figures/mnist/ae_white.pdf}
    \includegraphics[scale=0.15]{figures/mnist/vae_dgp_white.pdf}
    \includegraphics[scale=0.15]{figures/mnist/vae_igp_white.pdf}
    \caption{\textbf{Left:} AE; \textbf{Middle:} DGP, VAE; \textbf{Right:} IGP, VAE ($k=50$).}
    \label{fig:clf}
\end{figure*}
\bibliography{local.bib}